\setlist[itemize]{noitemsep, nolistsep}
\newtheorem{theorem}{Theorem}
\author{
Giovanni Varricchione$^1$\and
Toryn Q. Klassen$^{2, 3}$\and
Natasha Alechina$^{4,1}$\and\\
Mehdi Dastani$^1$\and
Brian Logan$^{5,1}$\And
Sheila A. McIlraith$^{2, 3}$\\
\affiliations
$^1$Utrecht Universiteit, Utrecht, The Netherlands\\
$^2$University of Toronto, Toronto, Canada\\
$^3$Vector Institute, Toronto, Canada\\
$^4$Open Universiteit, Heerlen, The Netherlands\\
$^5$University of Aberdeen, Aberdeen, United Kingdom\\
\emails
\{g.varricchione, n.a.alechina, m.m.dastani, b.s.logan\}@uu.nl,
\{toryn, sheila\}@cs.toronto.edu
}
\newtheorem{proposition}{Proposition}
\theoremstyle{definition}
\newtheorem{definition}{Definition}
\newtheorem*{problems*}{Learning Problems}
\crefname{algocf}{alg.}{algs.}
\Crefname{algocf}{Algorithm}{Algorithms}
\tiny\color{gray}, 
\crefname{lstlisting}{Listing}{Listings}
\newcommand{\ie}{i.e.,\ }
\newcommand{\citet}[1]{\citeauthor{#1} (\citeyear{#1})}
\def\la {\langle}
\def\ra {\rangle}
\newcommand{\tuple}[1]{\langle{#1}\rangle}
\newcommand{\ProbDistrs}{\Delta}
\def\Reals {\mathbb{R}}
\newcommand{\MDP}{\ensuremath{M}}
\newcommand{\MDPStates}{\ensuremath{S}}
\newcommand{\MDPState}{\ensuremath{s}}
\newcommand{\MDPActions}{\ensuremath{A}}
\newcommand{\MDPAction}{\ensuremath{a}}
\newcommand{\MDPTransition}{\ensuremath{p}}
\newcommand{\MDPReward}{\ensuremath{r}}
\newcommand{\MDPDiscount}{\gamma}
\newcommand{\Policy}{\pi}
\newcommand{\qapprox}{\tilde{q}}
\newcommand{\RM}{\mathcal{R}}
\newcommand{\RMStates}{\ensuremath{U}}
\newcommand{\RMState}{\ensuremath{u}}
\newcommand{\RMInState}{u_0}
\newcommand{\RMFinStates}{\ensuremath{F}}
\newcommand{\RMAlphabet}{\Sigma}
\newcommand{\RMAlphabetSymbol}{\sigma}
\newcommand{\RMTransition}{\delta_u}
\newcommand{\RMReward}{\delta_r}
\newcommand{\Labelling}{\ensuremath{L}}
\newcommand{\pdRM}{pdRM\xspace}
\newcommand{\pdRMs}{pdRMs\xspace}
\newcommand{\PRMStackAlphabet}{\Gamma}
\newcommand{\PRMStackInSymbol}{\ensuremath{Z}}
\newcommand{\PRMStackSymbol}{\ensuremath{z}}
\newcommand{\PRMStackString}{\zeta}
\newcommand{\PRMConfigTransition}[1]{\vdash_{#1}}
\newcommand{\MDPpdRM}{MDP-pdRM\xspace}
\newcommand{\MDPpdRMs}{MDP-pdRMs\xspace}
\newcommand{\mMDPpdRM}{\mathcal{T}}
\newcommand{\CpdRM}{CpRM\xspace}
\title{Pushdown Reward Machines for Reinforcement Learning}
\begin{document}


\maketitle

\begin{abstract}

Reward machines (RMs) are automata structures that encode (non-Markovian) reward functions for reinforcement learning (RL). RMs can reward any behaviour representable in regular languages and, when paired with RL algorithms that exploit RM structure, have been shown to significantly improve sample efficiency in many domains. In this work, we present \emph{pushdown reward machines} (\pdRMs), an extension of reward machines based on deterministic pushdown automata. \pdRMs can recognise and reward temporally extended behaviours representable in deterministic context-free languages, making them more expressive than reward machines. We introduce two variants of \pdRM-based policies, one which has access to the entire stack of the \pdRM, and one which can only access the top $k$ symbols (for a given constant $k$) of the stack. We propose a procedure to check when the two kinds of policies (for a given environment, \pdRM, and constant $k$) achieve the same optimal state values. We then provide theoretical results establishing the expressive power of pdRMs, and space complexity results for the proposed learning problems. Lastly, we propose an approach for off-policy RL algorithms that exploits counterfactual experiences with \pdRMs. We conclude by providing experimental results showing how agents can be trained to perform tasks representable in deterministic context-free languages using pdRMs.

\end{abstract}

\section{Introduction}
\label{sec:introduction}
Reward machines (RMs)~\cite{Toro-icml18,Toro-jair22} are automata structures that are used to represent (non-Markovian) reward functions for reinforcement learning (RL). Among their merits, they enable RL algorithms to exploit the compositional structure of RMs in learning, resulting in significant sample efficiency gains. By virtue of their correspondence to deterministic finite state automata (DFAs), any reward-worthy behaviour expressible by a regular language, as well as variants of other formal languages such as variants of linear temporal logic (LTL), can be encoded by an RM. This means that a human can write their non-Markovian reward function, or reward-worthy (temporally extended) behaviour in a diversity of programming/formal languages, compile them to an RM, and take advantage of the sample efficiency gains of these RM-tailored learning algorithms~\cite{Camacho//:19}. 

A restriction of RMs is that reward-worthy behaviour must be representable in a DFA-like structure---a regular or Type-3 language, according to Chomsky's hierarchy of languages~\cite{Chomsky//:59}. However, a number of interesting RL problems require the expressiveness of a context-free language.
To enhance the expressiveness of RMs, \citet{Bester//:24} introduced \emph{counting reward automata} (CRAs) which augment RMs with counters. CRAs have the expressive power of counter machines.
As a counter machine with two or more counters has the same expressive power as a Turing machine \cite{Minsky//:67}, CRAs with two or more counters can express behaviours that are representable by recursively enumerable languages, the largest class in the Chomsky hierarchy. 
Unfortunately, the expressive power of CRAs can come at a significant computational cost for RL. The resulting product MDP and policy incur a blowup depending on the maximum values that the counters can assume. This blowup can severely hinder training by slowing the convergence speed: as the MDP and policy state spaces grow, the time required to explore and learn increases. 

In this paper, following Chomsky's hierarchy, we propose a more modest enhancement to the expressiveness of RMs by augmenting RMs with a single stack. We call these enhanced RMs \emph{pushdown reward machines} (\pdRMs). Our enhancement is based on deterministic pushdown automata (DPDAs), which allow us to encode reward-worthy behaviours that are representable, for example, in LR grammars \cite{Knuth//:65}, or, precisely, deterministic context-free languages. \pdRMs can recognize a wide range of practical behaviours, such as modelling recursive calls in programming, collecting and delivering arbitrary numbers of parcels to specific locations, or search and rescue tasks where an agent must return to its starting point by remembering and retracing its (safe) route.

The main contributions of this paper are as follows: 
\begin{itemize}
    \item We define reward machines based on deterministic pushdown automata. Given their structure, \pdRMs can encode tasks representable in deterministic context-free languages;
        \item We define two variants of \pdRM-based policies. In the first, the policy has access to the entire \pdRM stack, in the second it can only access the top $k$ symbols (for a given $k$) of the stack;
        \item We provide a procedure to check whether optimal policies with access to the top $k$ stack symbols achieve the same state values as optimal policies with full-stack access;
        \item We analyse the expressive power of \pdRMs and compare it to RMs and CRAs. We also evaluate the space blowup for \pdRM- and CRA-based policies. We show that \pdRM-policies accessing only the top $k$ symbols of the stack are more compact than \pdRM-policies accessing the entire stack and CRA-based policies;
        \item We propose an approach that exploits counterfactual experiences by generating synthetic experiences based on the states of the \pdRM and the stack strings observed during training;
        \item We use \pdRMs to train RL agents in several domains. We compare them to CRAs in a domain from \cite{Bester//:24}. Then, we show the practical effects of the space complexity results we establish. We show how counterfactual and hierarchical approaches for \pdRMs can be used to increase sample efficiency. Finally, we use \pdRMs in a continuous domain, and compare against a deep learning algorithm using recurrent neural networks.
 \end{itemize}      

\section{Preliminaries}
\label{sec:preliminaries}
In reinforcement learning (RL), the environment in which agents act and learn is modelled as a \emph{Markov decision process} (MDPs) \cite{puterman2014markov}.
A Markov decision process is a tuple $\MDP = \la \MDPStates, \MDPActions, \MDPTransition, \MDPReward, \MDPDiscount \ra$ where $\MDPStates$ is the non-empty set of states, $\MDPActions$ is the non-empty set of actions, $\MDPTransition: \MDPStates \times \MDPActions \to \ProbDistrs(\MDPStates)$ is the state transition function where $\ProbDistrs(\MDPStates)$ is the set of all probability distributions defined over $\MDPStates$, $\MDPReward: \MDPStates \times \MDPActions \times \MDPStates \to \Reals$ is the reward function, and $\MDPDiscount \in (0, 1)$ is the discount factor. 
We write $\MDPTransition(\MDPState' \mid \MDPState, \MDPAction)$ to denote the probability of transitioning from state $\MDPState$ to state $\MDPState'$ when action $\MDPAction$ is performed in $\MDPState$.
A \emph{policy} is a function $\Policy: \MDPStates \to \ProbDistrs(\MDPActions)$ mapping any state of the MDP to a probability distribution over the set of actions. We denote by $\Policy(\MDPAction \mid \MDPState)$ the probability that an agent following policy $\Policy$ performs action $\MDPAction$ in state $\MDPState$.

At each timestep $t$, the MDP is in some state $\MDPState_t$. The agent takes an action $\MDPAction_t \sim \Policy(\cdot \mid \MDPState_t) $ using its policy $\Policy$, after which the environment state is updated to $\MDPState_{t + 1} \sim \MDPTransition(\cdot \mid \MDPState_t, \MDPAction_t)$ and the agent is rewarded with $r_t = \MDPReward(\MDPState_t, \MDPAction_t, \MDPState_{t + 1})$. 
The goal of the agent is to learn an \emph{optimal} policy $\pi^*$, \ie one that maximizes the expected discounted reward $\mathbb{E}_{\pi^*} \left[ \sum_{k = 0}^\infty \gamma^k r_k \mid S_0 = s\right]$ from any MDP state $\MDPState$.

Reward machines were introduced by \citeauthor{Toro-jair22} (\citeyear{Toro-icml18,Toro-jair22})
to encode non-Markovian reward functions. A reward machine (RM) is a tuple $\RM = \la \RMStates, \RMInState, \RMFinStates, \RMAlphabet, \RMTransition, \RMReward \ra$, where $\RMStates$ is a finite non-empty set of states, $\RMInState$ is the initial reward machine state, $\RMFinStates \not\subseteq \RMStates$ is the set of final states, $\RMAlphabet$ is the input alphabet, $\RMTransition: \RMStates \times \RMAlphabet \to \left(\RMStates \cup \RMFinStates\right)$ is the transition function, and $\RMReward: \RMStates \times \RMAlphabet \to  \Reals$ is the output reward function.  
The transitions in the reward machine are related to transitions in the MDP by a \emph{labelling function} $\Labelling: \MDPStates \times \MDPActions \times \MDPStates \to \RMAlphabet$ which labels each state-action-state triple of the MDP with an input symbol of the reward machine.
When the reward function is specified by a reward machine, the agent's policy is defined over the Cartesian product of the set of MDP states and the set of states of the reward machine.

\section{Pushdown Reward Machines}
\label{sec:pushdown-reward-machines}
In this section, we define pushdown Reward Machines (\pdRMs). Like RMs, \pdRMs are used to express (non-Markovian) reward functions, however their automaton structure is enhanced with a stack, which serves as additional memory. In so doing, they enable the expression of reward-worthy temporally-extended behaviours that correspond to deterministic context-free languages.

\begin{definition}[Pushdown Reward Machine]
    A \emph{pushdown reward machine} (\pdRM) is a tuple $\RM = \la \RMStates, \RMInState, \RMFinStates, \RMAlphabet, \PRMStackAlphabet, \PRMStackInSymbol, \RMTransition, \RMReward \ra $, where:
    \begin{itemize}
    \item $\RMStates$ is the finite set of states; 
    \item $\RMInState \in \RMStates$ is the initial state; 
    \item $\RMFinStates \not\subseteq \RMStates$ is the set of final states; 
    \item $\RMAlphabet_\epsilon = \RMAlphabet \cup \{ \epsilon \}$ where $\RMAlphabet$ is the input alphabet;  
    \item $\PRMStackAlphabet_\epsilon = \PRMStackAlphabet \cup \{ \epsilon \}$ where $\PRMStackAlphabet$ is the stack alphabet;  
    \item $\PRMStackInSymbol \in \PRMStackAlphabet$ is the initial stack symbol;   
    \item $\RMTransition: \RMStates \times \RMAlphabet_\epsilon \times \PRMStackAlphabet_\epsilon \to \left(\RMStates \cup \RMFinStates\right) \times \PRMStackAlphabet_{\epsilon}^*$ is the transition function; and 
    \item $\RMReward: \RMStates \times \RMAlphabet_\epsilon \times \PRMStackAlphabet_\epsilon \to \Reals $ is the reward function. 
    \end{itemize}
    \end{definition}
\noindent where $\epsilon$ is the empty string. Below, we denote by $\PRMStackSymbol \in \PRMStackAlphabet$ individual symbols of the stack alphabet, and by $\PRMStackString \in \PRMStackAlphabet^*$ (possibly empty) stack strings. The transition function $\RMTransition$ takes as input the current \pdRM state $\RMState$, the current input symbol $\RMAlphabetSymbol \in \RMAlphabet_\epsilon$ and the topmost symbol of the stack $\PRMStackSymbol \in \PRMStackAlphabet_\epsilon$, and returns a pair $\left( \RMState', \PRMStackString' \right)$ where $\RMState' \in \left(\RMStates \cup \RMFinStates\right)$ is the next state and $\PRMStackString' \in \PRMStackAlphabet_{\epsilon}^*$ is a string which replaces $\PRMStackSymbol$ as the topmost symbol(s) of the stack. Note that $\RMAlphabetSymbol$, $\PRMStackSymbol$ and $\PRMStackString'$ may be the empty string $\epsilon$. Whenever $\RMAlphabetSymbol = \epsilon$, the \pdRM does not read any symbol from the input, making a so-called ``silent'' transition. If $\PRMStackSymbol = \epsilon$, no symbol is popped from the stack. Finally, if $\PRMStackString' = \epsilon$, no symbol is pushed to the stack.
In what follows, we consider only \emph{deterministic} \pdRMs where, for any \pdRM state $\RMState$ and stack symbol $\PRMStackSymbol$, if $\RMTransition(\RMState, \epsilon, \PRMStackSymbol)$ is defined, then $\RMTransition(\RMState, \RMAlphabetSymbol, \PRMStackSymbol)$ is not defined for any $\RMAlphabetSymbol \in \RMAlphabet$.
The reward function $\RMReward$ takes as input the current \pdRM state $\RMState$, the current input symbol $\RMAlphabetSymbol$ and the topmost symbol $\PRMStackSymbol$ of the stack and returns a reward.

At each timestep, a \pdRM is in a \emph{configuration} $\la \RMState, \PRMStackSymbol\PRMStackString \ra \in \RMStates \times \PRMStackAlphabet_{\epsilon}^*$, where $\RMState$ is the state of the pushdown reward machine and $\PRMStackSymbol\PRMStackString$ is the current string on the stack with $\PRMStackSymbol$ as the topmost symbol on the stack. The \pdRM reads the current input symbol $\RMAlphabetSymbol \in \RMAlphabet_\epsilon$, transitions to a new configuration $\la \RMState', \PRMStackString'\PRMStackString \ra$ where $\la \RMState', \PRMStackString'\ra = \RMTransition(\RMState, \RMAlphabetSymbol, \PRMStackSymbol)$, and outputs a reward $ r = \RMReward(\RMState, \RMAlphabetSymbol, \PRMStackSymbol)$. We write $\la \RMState, \PRMStackString \ra \PRMConfigTransition{\RMAlphabetSymbol} \la \RMState', \PRMStackString' \ra$ to denote that the \pdRM moves from $\la \RMState, \PRMStackString \ra$ to $\la \RMState', \PRMStackString' \ra$ upon reading the symbol $\RMAlphabetSymbol$.

We illustrate the expressive power of a \pdRM using a simple task expressible in a context-free language, which we call ``Maze'' task. In the task, the agent has to navigate a (gridlike) maze from a starting location to find a treasure and return to the starting point by following the path it traversed to reach the treasure in the reverse direction. 
The actions available to the agent are $\MDPActions = \{ \mathsf{u}, \mathsf{d}, \mathsf{l}, \mathsf{r} \}$, denoting $\mathsf{u}$p, $\mathsf{d}$own, $\mathsf{l}$eft, and $\mathsf{r}$ight respectively, and the task can be defined using the following (deterministic) context-free grammar:
\begin{align*}
S &\rightarrow P\,\mathsf{x}\\
P &\rightarrow \mathsf{u}\,P\,\mathsf{d} \mid \mathsf{d}\,P\,\mathsf{u} \mid \mathsf{l}\,P\,\mathsf{r} \mid \mathsf{r}\,P\,\mathsf{l} \mid \mathsf{t}
\end{align*}
where $\mathsf{t}$ denotes the treasure and $\mathsf{x}$ the starting point.

The corresponding \pdRM is shown in \Cref{fig:pdrm-maze}. The states in the \pdRM are $\RMStates = \{ u_0, u_1, u_2, u_3 \}$, the stack alphabet is $ \PRMStackAlphabet = \{ \mathsf{u}, \mathsf{d}, \mathsf{l}, \mathsf{r} \}$, the input alphabet $\RMAlphabet = 2^{\PRMStackAlphabet \cup \{ \mathsf{t}, \mathsf{x} \}}$ and the labelling $\Labelling$ is defined as follows:
    \begin{itemize}
        \item $a \in \Labelling(\MDPState, \MDPAction, \MDPState')$;
        \item $\mathsf{t} \in \Labelling(\MDPState, \MDPAction, \MDPState')$ if $\MDPState \neq \MDPState'$ and $\MDPState'$ is the location of the treasure;
        \item $\mathsf{x} \in \Labelling(\MDPState, \MDPAction, \MDPState')$ if $\MDPState \neq \MDPState'$ and $\MDPState'$ is the starting point.
    \end{itemize}

    \begin{figure}
        \centering
        \resizebox{\linewidth}{!}
        {
            \begin{tikzpicture}[shorten >=1pt,auto]
                \node[state,initial]      (u_0)                           {\LARGE $u_0$};
                \node[state]              (u_1) [right=5cm of u_0]        {\LARGE $u_1$};
                \node[state, accepting]   (u_3) [right=5cm of u_1]  {\LARGE $u_3$};
                \node[state, accepting]   (u_2) [above=3cm of u_3]  {\LARGE $u_2$};
    
                \path[->]
                    (u_0)   
                        edge [loop below]   node [align=center] {\LARGE $ \{ a \}, \PRMStackSymbol / a\ \PRMStackSymbol, 0$}    ()
                        edge    node    {\LARGE $ \{ a, \mathsf{t} \}, \PRMStackSymbol / a\ \PRMStackSymbol, 0$}  (u_1)
    
                    (u_1)
                        edge [loop below]   node [align=center] {\LARGE $\{ \overline{a} \}, a / \epsilon, 0$}    ()
                        edge    node    {\LARGE $ \{ \overline{a}, \mathsf{x} \}, a / \epsilon, 1$}  (u_3)
                        edge [bend left]   node [align=center]    {\LARGE $\{ a' \}, a / \epsilon, -1$\\\LARGE $ \{ a', \mathsf{x} \}, a / \epsilon, -1$}    (u_2);
            \end{tikzpicture}
        }
        \caption{Pushdown reward machine for the Maze task. Each transition is labelled with a tuple $\ell, \PRMStackSymbol / \PRMStackString, r$, where $\ell$ is the input observation, $\PRMStackSymbol$ the top symbol on the stack, $\PRMStackString$ the string of symbols pushed onto the stack (with the new top symbol leftmost in $\PRMStackString$), and $r$ is the output reward. The symbol ``$\PRMStackSymbol$'' indicates an arbitrary symbol in $\PRMStackAlphabet$. In the transition from $u_1$ to $u_2$, $a'$ represents any direction except for $\overline{a}$, the opposite direction to $a$. 
        }
        \label{fig:pdrm-maze}
    \end{figure}

As with standard reward machines, \pdRMs can be used to reward the agent in MDPs. The product of an MDP and a \pdRM gives rise to an ``\MDPpdRM'', defined as follows.

\begin{definition}
        A \emph{Markov decision process with a pushdown reward machine} (\MDPpdRM) is a tuple $\mMDPpdRM = \la \MDPStates, \MDPActions, \MDPTransition, \MDPDiscount, \Labelling, \RM \ra$, where $\MDPStates, \MDPActions, \MDPTransition, \MDPDiscount$ are defined as in an MDP, $\Labelling: \MDPStates \times \MDPActions \times \MDPStates \to \RMAlphabet$ is a labelling function, and $\RM = \la \RMStates, \RMInState, \RMFinStates, \RMAlphabet, \PRMStackAlphabet, \PRMStackInSymbol, \RMTransition, \RMReward \ra $ is a \pdRM.
\end{definition}

Markov decision processes with a pushdown reward machine are analogous to the \emph{Markov decision processes with a reward machine} introduced in \cite{Toro-icml18}, but with rewards specified by a \pdRM rather than an RM.

An \MDPpdRM induces a \emph{product MDP} $\MDP_\mMDPpdRM = \la \MDPStates_\mMDPpdRM, \MDPActions_\mMDPpdRM, \MDPTransition_\mMDPpdRM, \MDPReward_\mMDPpdRM, \MDPDiscount_\mMDPpdRM \ra $ where:

\begin{itemize}
    \item $\MDPStates_\mMDPpdRM := \MDPStates \times \left( \RMStates \cup \RMFinStates \right) \times \PRMStackAlphabet^*$;
    \item $\MDPActions_\mMDPpdRM := \MDPActions$;
    \item $\begin{aligned}[t]\MDPTransition_\mMDPpdRM (\la \MDPState', \RMState', &\PRMStackString' \ra \mid \la \MDPState, \RMState, \PRMStackString \ra, \MDPAction) := \\&\begin{cases}
        \MDPTransition(\MDPState' \mid \MDPState, \MDPAction) & \text{if } \RMState \in \RMFinStates, \RMState = \RMState' \text{ and } \PRMStackString = \PRMStackString'\\
        \MDPTransition(\MDPState' \mid \MDPState, \MDPAction) & \text{if } \RMState \not\in \RMFinStates \text{ and }\\& \la \RMState, \PRMStackString \ra \PRMConfigTransition{\Labelling( \MDPState, \MDPAction, \MDPState' )} \la \RMState', \PRMStackString' \ra\\
        0 & \text{otherwise}
    \end{cases}\end{aligned}$
    
    \item $\MDPReward_\mMDPpdRM \left( \la \MDPState, \RMState, \PRMStackSymbol\PRMStackString \ra, \MDPAction, \la \MDPState', \RMState', \PRMStackString' \ra \right) := \RMReward( \RMState, \Labelling( \MDPState, \MDPAction, \MDPState' ), \PRMStackSymbol )$;
    \item $\MDPDiscount_\mMDPpdRM := \MDPDiscount$.
\end{itemize}
Although not specified, the starting state of the \pdRM in the product MDP is always $\RMInState$.

We now define the possible policies for \MDPpdRMs. The first type of policy has full access to the current state of the product MDP, and thus to all of the stack's contents.

\begin{definition}[Policy]
    A \emph{policy} in an \MDPpdRM is a function $\Policy: \MDPStates \times \RMStates \times \PRMStackAlphabet^* \to \Delta(\MDPActions)$. 
\end{definition}

A policy has, at each timestep, access to the current MDP state, \pdRM state, and the entire contents of the \pdRM stack. Since the stack is potentially unbounded, the state space of policies (hence their size) is potentially unbounded. As such, policies are not well-suited to many RL algorithms, e.g., tabular algorithms, especially in non-episodic tasks (\ie a task in which the agent must act indefinitely in the environment). We therefore define a bounded version of policies which have access only to a portion of the stack. In this way, we bound the size of the policy, making it finite regardless of whether the task is episodic or not.

\begin{definition}[$k$-policy]
    Given a constant $k \geq 0$, a \emph{$k$-policy} is a function $\Policy: \MDPStates \times \RMStates \times \left(\bigcup_{j\le k}\PRMStackAlphabet^j\right) \to \Delta(\MDPActions)$. 
\end{definition}

While $k$-policies can be more suited to learning in RL, we note that their limited observability of the product MDP's state can lead to suboptimal behaviours. In the next section, we will provide a procedure to verify in which cases a $k$-policy has access to enough information to achieve behaviours with the same state values as optimal policies.

\section{When Are \texorpdfstring{$k$}{k}-Policies Optimal?}
\label{sec:when-can-we-learn}
As we have just observed, $k$-policies can achieve suboptimal behaviours compared to policies, since they have limited access to the product MDP's state. 
For example, imagine a task in which the agent is given a sequence of rooms (which is saved on the \pdRM's stack) that it must clean. Depending on the room, the agent needs different supplies, which are stored in a storage room, to clean it. Gathering all the needed supplies before cleaning the rooms is more efficient as the agent will not have to return to the storage room, and for a policy to anticipate which supplies are needed, it needs access to the entire stack.
However, having access to the entire stack is not always necessary to achieve an optimal behaviour. Indeed, in the Maze task it is possible to learn an optimal policy given access to only the top symbol on the stack, as the choice of direction when returning to the starting point is determined only by the top symbol.
In this section, we characterise when, for each state, the state value for optimal $k$-policies is the same as the state value for optimal policies.

First, we define the notion of \emph{$k$-equivalence} for states in the product MDP induced by an \MDPpdRM.

\begin{definition}[$k$-equivalence $\sim_k$] 
    For a product MDP obtained from an \MDPpdRM, two states $\la s, u, \zeta \ra, \la s, u, \zeta' \ra$, are $k$-equivalent, denoted by $\la s, u, \zeta \ra \sim_k \la s, u, \zeta' \ra$, if and only if the top $k$ symbols of $\zeta$ and $\zeta'$ are the same.
\end{definition}

$k$-equivalence is an equivalence relation which partitions the state space of the product MDP into equivalence classes, where the members of each class share the MDP state, \pdRM state, and top $k$ symbols of the \pdRM stack. 
In $k$-policies, the agent's policy is defined over the set of equivalence classes of the $\sim_k$ relation. We can therefore check whether optimal $k$-policies have the same value as optimal policies. 

The size of the product MDP state space is potentially countably infinite since the states of the product contain the stack, and its size is unbounded.
However the rewards are bounded for all states and actions, and depend only on the the \pdRM state, the current input symbol and the topmost symbol on the stack. As a result, the potentially unbounded growth in the size of the stack does not affect the immediate rewards. The Bellman operator therefore remains a contraction mapping and value iteration converges to the optimal value function as in the case of finite state spaces (see, e.g., \cite{Feinberg2011}).
\begin{proposition}\label{prop:top-k}
If after performing value iteration on the product MDP, all $\sim_k$-equivalent states have the same value and the same sets of optimal actions, then optimal $k$-policies achieve the same state values as optimal policies. 
\end{proposition}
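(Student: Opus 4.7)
The plan is to construct a specific top-$k$-policy $\Policy_k$ from the optimal value function $V^*$ produced by value iteration on the full-stack product MDP, and to show that it attains $V^*$ at every state. Since every top-$k$-policy can be viewed as a full-policy that ignores the stack below depth $k$, its value is automatically bounded above by $V^*$; so proving $V^{\Policy_k} = V^*$ simultaneously yields both directions of the claim.

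First, I would use the hypothesis to define $\Policy_k$ as follows. For each triple $(s, u, \tau)$ with $|\tau| \le k$, pick any action $a^*$ belonging to the common set of optimal actions shared by every full product state $\la s, u, \zeta \ra$ whose top $k$ stack symbols equal $\tau$; this common set is well-defined and nonempty by assumption, so setting $\Policy_k(a^* \mid s, u, \tau) = 1$ gives a well-defined deterministic top-$k$-policy. Viewed as a full-policy on the product MDP, $\Policy_k$ is by construction greedy with respect to $V^*$: at every full state $\la s, u, \zeta \ra$ it selects an action attaining the maximum in the Bellman optimality equation.

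A standard argument then closes the proof. The Bellman policy-evaluation operator $T^{\Policy_k}$ is a $\MDPDiscount$-contraction in the sup-norm, $V^{\Policy_k}$ is its unique fixed point, and $V^*$ also satisfies $T^{\Policy_k} V^* = V^*$ because every action $\Policy_k$ chooses is optimal at the current state. By uniqueness of the fixed point, $V^{\Policy_k} = V^*$ on the full product MDP, and the upper-bound observation above gives the proposition.

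The main obstacle is justifying contraction-based reasoning on the countably infinite state space induced by the unbounded stack; this has already been dealt with in the paragraph preceding the proposition (rewards are uniformly bounded, so the Bellman operator remains a sup-norm contraction, cf.\ \cite{Feinberg2011}). A minor subsidiary check is that the reward depends only on information visible to a top-$k$-policy (the \pdRM state, the current input symbol, and the topmost stack symbol, all of which lie in the portion of the state that $\Policy_k$ can read whenever $k \ge 1$; the $k=0$ case is absorbed into the hypothesis itself), so evaluating $\Policy_k$ in the full product MDP coincides with evaluating it as a genuine top-$k$-policy.
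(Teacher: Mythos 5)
Your proposal is correct and follows essentially the same route as the paper's proof, which simply asserts that under the hypothesis there is an optimal full-policy whose action selection depends only on the top $k$ stack symbols, and hence a top-$k$-policy achieving the same state values. You merely make explicit the standard supporting details (choosing an action from the common optimal set per $\sim_k$-class, and the greedy-policy/fixed-point argument showing $V^{\Policy_k}=V^*$) that the paper leaves implicit.
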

\begin{proof}
If the condition in the proposition holds, there exists an optimal policy where action selection depends only on the top $k$ symbols on the stack. Hence, there exists a $k$-policy that results in the same state values as an optimal policy. 
\end{proof}
In practice, it is not possible to check whether the condition in Proposition~\ref{prop:top-k} holds for an infinite MDP. However, by bounding the maximal stack size to, e.g., the maximal length of a stack string given the episode length, we can check whether the condition holds for the resulting finite MDP.

\section{Comparison With CRAs}
\label{sec:theoretical-results}
In this section, we analyse the relative expressive power of \pdRMs and counting reward automata (CRA) \cite{Bester//:24} and evaluate the space blowup for \pdRM- and CRA-based policies. CRAs are based on counter machines (CM), \ie finite automata augmented with $k$ counters (for given $k$) that may be incremented, decremented and tested for zero. 

First, we consider the expressive power of \pdRMs and CRAs. 
As both \pdRMs and CRAs are automata, the reward functions that they can encode correspond to the languages they accept. Deterministic pushdown automata are strictly more expressive than counter machines with one counter \cite{Fischer//:66}. On the other hand, counter machines with 2 or more counters can simulate a Turing machine \cite{Minsky//:67}, and as such they are strictly more expressive than \pdRMs with one stack. 

Next, we consider the space complexity of \pdRMs and CRAs, where by ``space complexity'' we mean the bounds on the sizes of policies, \ie the size of the table representing the polices that are learnt with \pdRMs and CRAs. As \pdRMs and CRAs only augment the state space of the underlying MDP, we analyse the blowup they cause in the state space to determine the blowup in the size of the policies. 
To compare space complexity, we focus on tasks representable in deterministic context-free languages and consider only episodic tasks of length $n$, as, in the case of non-episodic tasks (\ie tasks where the agent acts in the environment indefinitely), both stacks and counters can assume infinitely many values and policies learnt with both \pdRMs and CRAs may be infinite.

\subparagraph{Policies} 
    As policies have access to the entire stack, they incur a blowup depending on the number of possible strings that can appear on the stack (\ie the cardinality of the \textit{stack language}).
    Let $m$ be the maximum number of symbols that can be added to the stack at any transition of the \pdRM. We define an $\epsilon$-sequence as a sequence of $\epsilon$-transitions in the \pdRM which are taken before the \pdRM must read a symbol to advance. Let $e$ be the maximum number of $\epsilon$-transitions pushing symbols to the stack in a single $\epsilon$-sequence of the \pdRM. At each step, the \pdRM can add at most $m(e + 1)$ symbols to the stack, and, for input words of length $n$, the maximum length of the stack string is bounded by $nm(e + 1)$. 
    Hence, the cardinality of the stack language is $|\PRMStackAlphabet|^0 + |\PRMStackAlphabet|^1 + \ldots + |\PRMStackAlphabet|^{nm(e+1)}$. 
    If $|\PRMStackAlphabet| > 1$, then the number of stack strings is exactly $\frac{|\PRMStackAlphabet|^{nm(e+1) + 1} -1}{|\PRMStackAlphabet| - 1} \in O\left(|\PRMStackAlphabet|^{nm(e+1)}\right)$; for $|\PRMStackAlphabet| = 1$ we have that the number of stack strings is $nm(e + 1) + 1 \in O(nm(e + 1))$.  
    
    Thus, we have the following: 
    
    \begin{theorem}\label{thm:full-mdp-pdrm-space-blowup}
        If $|\PRMStackAlphabet| \geq 2$, policies incur a blowup in $O\left(|\PRMStackAlphabet|^{nm(e+1)}\right)$. If $|\PRMStackAlphabet| =1$, policies incur a blowup in $O\left({nm(e+1)}\right)$.
    \end{theorem}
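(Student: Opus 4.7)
The plan is to bound the number of distinct product-MDP states $\la s,u,\zeta\ra$ relative to $|\MDPStates|$, since a full-policy is a map from such states to $\Delta(\MDPActions)$. The blowup factor over the base MDP is therefore at most $|\RMStates \cup \RMFinStates|$ times the number of reachable stack contents $\zeta$, and since $|\RMStates \cup \RMFinStates|$ is a constant of the \pdRM it can be absorbed into the $O(\cdot)$. So the real task is to count reachable stack strings over an episode of length $n$.

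First I would bound the maximum length of $\zeta$ attainable after consuming $n$ input symbols. Each input symbol triggers exactly one input-consuming transition of the \pdRM, which pushes at most $m$ symbols. Around each such transition the \pdRM may perform an $\epsilon$-sequence, which by definition contains at most $e$ pushing $\epsilon$-transitions, contributing at most $me$ further symbols. Amortising the $\epsilon$-sequence onto the immediately following input-consuming step, each of the $n$ input steps grows the stack by at most $m(e+1)$ symbols, so $|\zeta|\le nm(e+1)$. The determinism assumption is what makes this amortisation well-defined, since it guarantees that whenever an $\epsilon$-transition is available from the current configuration it is the unique move.

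Second, I would count stack strings of length at most $N:=nm(e+1)$ over a stack alphabet of size $|\PRMStackAlphabet|$; this is the geometric sum $\sum_{j=0}^{N} |\PRMStackAlphabet|^j$. For $|\PRMStackAlphabet|\ge 2$ the closed form $\frac{|\PRMStackAlphabet|^{N+1}-1}{|\PRMStackAlphabet|-1}$ lies in $O(|\PRMStackAlphabet|^{N})=O(|\PRMStackAlphabet|^{nm(e+1)})$; for $|\PRMStackAlphabet|=1$ every term equals $1$, giving $N+1\in O(nm(e+1))$. Multiplying by the constant $|\RMStates\cup\RMFinStates|$ and by $|\MDPStates|$ (also constant in the parameters appearing in the bound) yields the two claimed cases.

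The only delicate point I anticipate is making the $m(e+1)$-per-step accounting airtight: one must be careful not to double-count the $\epsilon$-sequence between two consecutive input-consuming transitions, and one must handle the possibility of an initial or trailing $\epsilon$-sequence before the first or after the last input symbol. This is resolved either by the amortisation argument above or by adding a single $m(e+1)$ additive slack, which is absorbed by $O(\cdot)$ in both cases. All remaining steps are routine arithmetic with geometric series.
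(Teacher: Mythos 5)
Your argument is correct and follows essentially the same route as the paper: bound the stack length by $nm(e+1)$ by charging each $\epsilon$-sequence (at most $e$ pushing transitions, each pushing at most $m$ symbols) to the adjacent input-consuming step, then count stack strings via the geometric sum $\sum_{j=0}^{nm(e+1)}|\PRMStackAlphabet|^j$ and split on $|\PRMStackAlphabet|\ge 2$ versus $|\PRMStackAlphabet|=1$. Your extra care about double-counting, boundary $\epsilon$-sequences, and absorbing the constant factors $|\RMStates\cup\RMFinStates|$ and $|\MDPStates|$ only makes explicit what the paper leaves implicit.
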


\subparagraph{$k$-policies}
    Unlike for policies, for $k$-policies we just have to evaluate the number of strings of length at most $k$ made of symbols from the stack alphabet, regardless of the task encoded by the \pdRM. Clearly, the number of stack strings of length up to $k$ is $|\PRMStackAlphabet|^0 + |\PRMStackAlphabet|^1 + \ldots + |\PRMStackAlphabet|^k$. Then, similarly to policies, if $|\PRMStackAlphabet| > 1$, the number of stack strings is exactly $\frac{|\PRMStackAlphabet|^{k + 1} -1}{|\PRMStackAlphabet| - 1} \in O(|\PRMStackAlphabet|^k)$, and for $|\PRMStackAlphabet| = 1$ the number of stack strings is $k + 1 \in O(k)$.  
    Note that for $k = 1$, this implies that the blowup is linear in $|\PRMStackAlphabet|$. Hence, we get the following.

    \begin{theorem}\label{thm:k-stack-mdp-pdrm-space-blowup}
        If $|\PRMStackAlphabet| \geq 2$, $k$-policies incur a blowup in $O\left( |\PRMStackAlphabet|^k\right)$. If $|\PRMStackAlphabet| = 1$, $k$-policies incur a blowup in $O(k)$.
    \end{theorem}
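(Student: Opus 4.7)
The plan is to carry out the counting argument sketched in the paragraph immediately preceding the theorem statement, turning the informal observation into a proof. Since a top-$k$-policy is a function on $\MDPStates \times \RMStates \times \bigl(\bigcup_{j \le k}\PRMStackAlphabet^j\bigr)$, and the blowup compared to a plain MDP-plus-RM-state policy is determined entirely by the extra $\bigcup_{j \le k}\PRMStackAlphabet^j$ factor, it suffices to bound the cardinality of that union.

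First, I would observe that the sets $\PRMStackAlphabet^j$ for distinct $j$ are pairwise disjoint (strings of distinct lengths), so
\[
\left|\bigcup_{j \le k}\PRMStackAlphabet^j\right| = \sum_{j=0}^{k} |\PRMStackAlphabet|^j.
\]
Then I would split into the two cases in the statement. For $|\PRMStackAlphabet| \ge 2$, applying the standard finite-geometric-series formula gives
\[
\sum_{j=0}^{k} |\PRMStackAlphabet|^j = \frac{|\PRMStackAlphabet|^{k+1}-1}{|\PRMStackAlphabet|-1},
\]
which is bounded above by $|\PRMStackAlphabet|^{k+1} = |\PRMStackAlphabet| \cdot |\PRMStackAlphabet|^k$, and since $|\PRMStackAlphabet|$ is a constant of the \pdRM, this lies in $O(|\PRMStackAlphabet|^k)$. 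For $|\PRMStackAlphabet| = 1$ the sum collapses to $k+1$, which is in $O(k)$.

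Finally, I would note that the per-triple count of policy entries is independent of $|\MDPStates|$ and $|\RMStates|$, so the number of equivalence classes on which a top-$k$-policy must be specified is at most $|\MDPStates| \cdot |\RMStates|$ times the bound above; this is exactly what is meant by a ``blowup'' of the respective order. The argument is purely combinatorial and I do not foresee any real obstacle; the only thing worth flagging for correctness is the empty-string case $j = 0$, which contributes $1$ to the sum and is needed to match the top-of-stack behaviour when the stack is empty, but it is already absorbed in both asymptotic bounds.
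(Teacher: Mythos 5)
Your proposal is correct and follows essentially the same route as the paper: the paper's argument is exactly the count $\sum_{j=0}^{k}|\PRMStackAlphabet|^j$ of stack strings of length at most $k$, evaluated via the geometric-series formula and split into the cases $|\PRMStackAlphabet|>1$ and $|\PRMStackAlphabet|=1$. Your added remarks on the disjointness of the $\PRMStackAlphabet^j$, the $j=0$ term, and the multiplicative factor $|\MDPStates|\cdot|\RMStates|$ only make explicit what the paper leaves implicit.
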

            
\paragraph{Counting Reward Automata policies}

    A policy trained with access to a CRA (\emph{CRA policy}) has access to the values stored in all counters at each timestep. To evaluate the blowup in the size of the CRA policy, we analyse the number of possible combinations of counter values, which we call ``counter configurations''. 

    First of all, for any DPDA recognising some language $L$, it is possible to define a CM that recognises the same language by simulating the DPDA. The set of states and the state-transition function of the CM are the same as those of the DPDA. In order to simulate the stack, we need to encode each possible stack string in a counter configuration. 
    As we noted in the discussion of \MDPpdRM policies above, the cardinality of the stack language, and thus the number of counter configurations needed, is in $O(|\PRMStackAlphabet|^{nm(e+1)})$ (where $m$ and $e$ are as in the argument for \cref{thm:full-mdp-pdrm-space-blowup}). This implies the following. 

    \begin{theorem}\label{thm:cra-space-blowup-big-O}
        For any task representable as a DCFL, there always exists a CRA which encodes it and incurs a blowup of $O\left(|\PRMStackAlphabet|^{nm(e+1)}\right)$ in the size of the CRA policy.
    \end{theorem}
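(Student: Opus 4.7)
The plan is to construct, for an arbitrary task $L$ representable as a DCFL, a CRA that simulates the \pdRM encoding $L$ and then bound the number of distinct counter configurations that CRA can reach. Since the expressive power comparison already shows that counter machines with two or more counters subsume DPDAs, the existence of \emph{some} CRA for $L$ is not in doubt; what we need is a construction whose state-space blowup matches the promised bound.

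First, I would take the \pdRM $\RM$ for $L$ with stack alphabet $\PRMStackAlphabet$ and invoke a standard Minsky-style simulation of a DPDA by a counter machine (see \cite{Minsky//:67}): the stack contents $\PRMStackSymbol_1 \PRMStackSymbol_2 \cdots \PRMStackSymbol_j$ are encoded as a single integer written in base $|\PRMStackAlphabet|+1$ stored in one counter, with a constant number of auxiliary counters used as scratch space to implement push and pop via multiplication and modular arithmetic. The control states of the CRA mirror those of $\RM$ (extended with the finitely many bookkeeping states needed to perform the arithmetic), and reward emission is driven by decoding the top symbol, which is a simple modular test on the encoding counter. This yields a CRA whose reward-worthy language is exactly $L$.

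Next, I would bound the number of reachable counter configurations. Each reachable configuration of the encoding counter corresponds bijectively to a reachable stack string of $\RM$, and the scratch counters take values bounded by the encoding counter, so they add at most a constant-factor overhead when we quotient by the simulation invariants. Applying the bound already derived in the proof of \Cref{thm:full-mdp-pdrm-space-blowup}, for episodes of length $n$ the stack has length at most $nm(e+1)$, so the number of reachable stack strings---and hence of reachable counter tuples that the CRA-policy must distinguish---lies in $O(|\PRMStackAlphabet|^{nm(e+1)})$. Multiplying by $|\RMStates|$ (a constant independent of $n$) leaves the asymptotic bound unchanged, giving the claimed blowup of the CRA-policy.

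The main obstacle will be being careful about the scratch counters: a naive simulation can introduce transient states whose counter values are not themselves encodings of stack strings, which could in principle inflate the policy-size count. The fix is to observe that scratch counters are reset to $0$ between simulated \pdRM steps, so at the boundaries where the CRA-policy actually chooses an environment action, only configurations of the form ``encoding of a reachable stack string, all scratch counters zero'' are reachable; this restores the one-to-one correspondence with stack strings and preserves the $O(|\PRMStackAlphabet|^{nm(e+1)})$ bound.
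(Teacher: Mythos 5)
Your proposal is correct and follows essentially the same route as the paper: simulate the \pdRM by a counter machine that injectively encodes each reachable stack string as a counter configuration, and then bound the number of such configurations by the cardinality of the stack language established for \cref{thm:full-mdp-pdrm-space-blowup}. The extra care you take with the base encoding and with showing that scratch counters are zero at action-selection points only makes explicit what the paper's terser argument leaves implicit.
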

    
    The above result gives an upper bound on the number of possible counter combinations. We note that there is no general tight lower bound on the number of counter configurations for counter machines with $k$ counters ($k$-CM) to recognise arbitrary DCFLs. Below, we give an example of a DCFL for which any $k$-CM recognising it requires at least exponentially many (in the input's length) counter configurations.

    Let $L_p = \{ \sigma x \sigma^R \mid \sigma \in \{0, 1\}^*\}$ 
    be a ``marked palindrome'' language, where $\sigma^R$ is the string $\sigma$ but reversed. We show that, for any $k$-CM that recognises $L_p$, the number of counter configurations is at least exponential in the length of the input word. We recall an argument given in the proof of Theorem 1.3 in \cite{Fischer//:68}. For any $k$-CM to recognise $L_p$, it must be the case that after reading the first half of the word (\ie before the ``mark'' $x$), the $k$-CM must be in different counter configurations after reading two distinct strings $\sigma_1, \sigma_2$ of length $m$. This can happen if and only if the number of possible counter configurations (after reading words of length $m$) is in $\Omega(2^m)$, as we need to encode each string in a tuple of natural numbers. However, in the (worst) case that the string does not contain the mark $x$, the CM must encode the entire string of length $n$ in a combination of counter values. Therefore, in this second case the number of counter configurations is in $\Omega(2^n)$. Thus, we obtain the following result on the size of CRA policies.
    
    \begin{theorem}\label{thm:cra-space-blowup-omega}
    There is a task representable by a DCFL for which any CRA that encodes it incurs a blowup of $\Omega(2^n)$ in the size of the CRA policy.
    \end{theorem}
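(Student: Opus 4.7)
The plan is to take the marked palindrome language $L_p = \{\sigma x \sigma^R \mid \sigma \in \{0,1\}^*\}$ suggested in the paragraph before the statement, exhibit an episodic task whose reward function requires recognising $L_p$, and then show via a Myhill--Nerode-style distinguishability argument that any $k$-counter machine recognising $L_p$ must reach $\Omega(2^n)$ distinct counter configurations on inputs of length $n$. First, I would verify that $L_p$ is a DCFL by the standard DPDA construction (push each binary symbol while in a ``reading'' state, switch on the marker $x$, then pop and compare while reading the suffix). The task in the MDP can be set up so that reward $1$ is delivered on the final step of an episode of length $n$ exactly when the sequence of labels generated along the trajectory is a word in $L_p$, so any CRA encoding this reward function must, along each trajectory, effectively track membership in $L_p$.

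The main technical step is a counting argument following Theorem~1.3 of Fischer, Meyer and Rosenberg. Let $M$ be any $k$-CM recognising $L_p$, with state set $Q$, and for a prefix $\sigma$ let $\kappa(\sigma)$ denote the configuration (state paired with tuple of counter values) reached by $M$ after reading $\sigma$. Suppose towards contradiction that for some length $m$ there exist distinct $\sigma_1,\sigma_2 \in \{0,1\}^m$ with $\kappa(\sigma_1)=\kappa(\sigma_2)$. Then $M$ behaves identically on $\sigma_1 w$ and $\sigma_2 w$ for every continuation $w$; choosing $w = x\sigma_1^R$ contradicts $\sigma_1 x \sigma_1^R \in L_p$ together with $\sigma_2 x \sigma_1^R \notin L_p$. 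Hence $\sigma \mapsto \kappa(\sigma)$ is injective on $\{0,1\}^m$, producing at least $2^m$ distinct configurations; since $|Q|$ is a constant, the number of distinct reachable counter-value tuples is in $\Omega(2^m / |Q|) = \Omega(2^m)$. Taking $m = n$ (inputs entirely in the reading phase, which are still valid prefixes the CRA can observe during an episode of length $n$) yields the desired $\Omega(2^n)$ lower bound on counter configurations.

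Finally, since a CRA-policy is defined over the product of MDP states, CRA states and reachable counter configurations, the $\Omega(2^n)$ lower bound on counter configurations transfers directly to an $\Omega(2^n)$ lower bound on the CRA-policy's size. The main obstacle I anticipate is the lifting step from the classical CM lower bound to the CRA setting: one must argue that prefixes that are distinguishable for \emph{language recognition} remain distinguishable for \emph{optimal action selection} under the constructed reward, so that a CRA-policy cannot collapse two equivalence classes that any recognising CM must keep apart. Once this bridge is made precise (essentially by observing that identical CRA configurations lead to identical reward-emission behaviour on every suffix, and hence collapsing them would falsify some reward-bearing trajectory in the $L_p$-task), the remaining arithmetic and the appeal to Fischer, Meyer and Rosenberg is routine.
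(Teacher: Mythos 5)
Your proposal is correct and follows essentially the same route as the paper: the marked palindrome language $L_p$, the Fischer--Meyer--Rosenberg distinguishability argument forcing $\Omega(2^m)$ distinct counter configurations after reading $m$ input symbols, and the worst case $m=n$ where the mark never appears. Your explicit treatment of the lifting step from language recognition to reward encoding is a welcome extra precaution, but it does not change the substance of the argument, which the paper presents in the paragraph preceding the theorem.
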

    
    In summary, \pdRMs can incur an exponential, with respect to the episode's length, blowup in the size of policies. On the other hand, for $k$-policies the blowup is polynomial (and linear when $k = 1$) with respect to the cardinality of the stack alphabet. For CRAs, we can always obtain an upper bound for any task expressible as a DCFL by simulating a \pdRM for the task (\cref{thm:cra-space-blowup-big-O}). 
    
    We have seen a case in which a CRA policy incurs a blowup which is at least exponential in the length of the episode (\cref{thm:cra-space-blowup-omega}). However, in principle there is no tight lower bound on the blowup for policies based on CRAs, and \pdRMs. For example, it is easy to see that the number of strings in the stack language of a \pdRM which recognises the language $\{ 0^n 1^n \mid n \in \mathbb{N}\}$ is linear in the length of the input word. Similarly, one can prove the same bound for CRAs. Therefore, in this case, both policies and CRA policies would incur only a linear (in the episode's length) blowup in the policies' size. Because of this, whether CRAs or \pdRMs hould be used to encode a given task should be decided on a case-by-case basis. However, if $k$-policies (for a given $k$) are sufficient to learn optimal behaviours with respect to the task (see \cref{sec:when-can-we-learn}), then $k$-policies should be preferred to policies if the policy size is to be minimised.

\section{Exploiting Counterfactual Experiences}
\label{sec:counterfactual}
RMs were designed, primarily, with two objectives in mind: to provide a normal-form representation for temporally extended reward functions specified natively in RMs or translated from various languages, and to improve sample efficiency of RL via exploitation of the structure of that normal-form representation. The latter was achieved through the development of off-policy RL algorithms that performed counterfactual reasoning over the automata structure to consider experiences in different automata states. This approach was born out in algorithms such as QRM and CRM which operated in tabular and deep learning settings~\cite{Toro-icml18,Toro-jair22}, with identical behaviour in tabular domains.
At each timestep, these algorithms augment training with the \emph{real} experience with a set of additional synthetic \emph{counterfactual} experiences that correspond to the same $(s,a,s')$ transition experienced in a counterfactual (different) state of the RM. A modified version of this algorithm and approach was proposed by \citet{Bester//:24} for use with CRAs.

In this section, we introduce an extension of CRM, which we dub \emph{\CpdRM}, that can be used with \pdRMs. To exploit pdRMs, in addition to considering a counterfactual state of the \pdRM, \CpdRM~also changes the string on the stack to produce the counterfactual experiences. However, as we noted in \cref{sec:theoretical-results}, the number of stack strings can be exponential in the length of the episode. In practice, this can make \CpdRM~extremely slow given the number of counterfactual updates it would make at each timestep. Moreover, observe that in some cases it is even possible that some stack strings are never observed. Because of this, we designed \CpdRM~so that it uses only the set of observed stack strings to produce the counterfactual experiences. 

A version of \CpdRM~for policies is shown in \cref{alg:counterfactual-pdRM-algorithm} (the approach can be adapted to $k$-policies by modifying  action sampling and policy updates; see \cref{sec:Appendix-D} in the Appendix). Compared to CRM, the main changes in \CpdRM~lie in lines 2, 6, and 10. In line 2, we initialise the set of observed stack strings $\mathcal{O}$; in line 6 we add the current stack string $\zeta$ to $\mathcal{O}$ (obviously, if $\zeta \in \mathcal{O}$ already then this operation changes nothing); finally, in line 10 we cycle through the observed stack strings, and use these to generate the counterfactual experiences. The rest of the algorithm is effectively identical to the original CRM, making it usable with any off-policy RL algorithm. In \cref{sec:experimental-evaluation} we provide results for agents trained with Q-learning augmented with \CpdRM.

\begin{algorithm}[tb]
\caption{Q-learning with \CpdRM~(policy)} \label{alg:cpdrm}
\label{alg:counterfactual-pdRM-algorithm}
{\begin{flushleft}{\bfseries Input:} MDP-pdRM $\mMDPpdRM$, num\_episodes\end{flushleft}}
\begin{algorithmic}[1]
    \STATE Initialise $\qapprox(s, u , \zeta, a)$ arbitrarily for each $\MDPState \in \MDPStates, \RMState \in \RMStates, \PRMStackString \in \PRMStackAlphabet^*, a \in A$
    \STATE $\mathcal{O} \gets \{\}$ \COMMENT{Set of observed stack strings}
    \FOR{$\ell\leftarrow 0$ \TO num\_episodes} 
        \STATE $s \gets$ EnvInitialState(), $u \gets u_0$ and $\PRMStackString \gets \PRMStackInSymbol$
        \WHILE{$s$ is not terminal \AND $u \not\in F$} 
            \STATE $\mathcal{O} \gets \mathcal{O} \cup \{ \PRMStackString\}$
            \STATE Sample action $a$ using policy derived from $\qapprox$ (e.g., $\epsilon$-greedy) given current state $\tuple{s,\PRMStackString, u}$
            \STATE Take action $a$ and observe the next state $s'$
            \FOR{pdRM state $\RMState_c \in \RMStates$}
                \FOR{stack string $z_c\zeta_c \in \mathcal{O}$}
                    \STATE $\tuple{\RMState'_c, \PRMStackString'_c} \gets \RMTransition(\RMState_c, L(s, a, s'), \PRMStackSymbol_c)$
                    \STATE $r_c \gets \RMReward(u_c, L(s, a, s'), z_c)$
                    \IF{$s'$ is terminal \OR $u_c' \in F$} 
                        \STATE $\qapprox(s,u_c,z_c\zeta_c,a) \xleftarrow{\alpha} r_c$
                    \ELSE
                        \STATE $\qapprox(s,u_c,z_c\zeta_c,a) \xleftarrow{\alpha} r_c +\ $\\
                        $\qquad\qquad\qquad \gamma \max_{a'\in A}{\qapprox(s',u_c',\zeta_c'\zeta_c,a)}$
                \ENDIF
                \ENDFOR
            \ENDFOR
            \STATE Update pdRM configuration to $\tuple{\RMState', \PRMStackString'} \dashv_{L(s, a, s')} \tuple{\RMState, \PRMStackString}$
        \ENDWHILE
    \ENDFOR
\end{algorithmic}
\end{algorithm}

\section{Experimental Evaluation}
\label{sec:experimental-evaluation}
In this section we provide empirical results we have obtained from various experiments in five domains.\footnote{The code is available at \\  https://github.com/giovannivarr/pushdown-reward-machines.}
One of the five domains is taken from \cite{Bester//:24} to allow comparison of \pdRMs with CRAs. All domains were implemented using the Gymnasium framework \cite{Towers//:24}. For discrete domains, we have trained agents using Q-learning \cite{Watkins-qlearning//:92}, both in its vanilla form and with counterfactual updates using \CpdRM. For the continuous domain we have trained agents with Proximal Policy Optimization (PPO) \cite{Schulman//:17} and its recurrent variant, using, respectively, the \textsc{Stable-Baselines3} \cite{Raffin//:21} and \textsc{SB3 Contrib} implementations. 

During experiments, agents were trained and then evaluated by periodically running $10$ test episodes after a domain-specific number of training episodes. For all experiments we plot the rewards obtained by the agents in the test episodes, normalised between $1$ and $-1$. Specifically, we plot the median rewards with lines, and the 25$^{\text{th}}$ and 75$^{\text{th}}$ percentiles of the rewards as shadowed areas. For details on the experimental setup, specifications of the machines used, and the \pdRMs we have used, we refer the reader to \Cref{sec:Appendix-E} of the Appendix.

    \subsection{\textsc{LetterEnv}}\label{subsec:letterenv-experiment}

        \begin{figure}
            \centering
            \includegraphics[width=\linewidth]{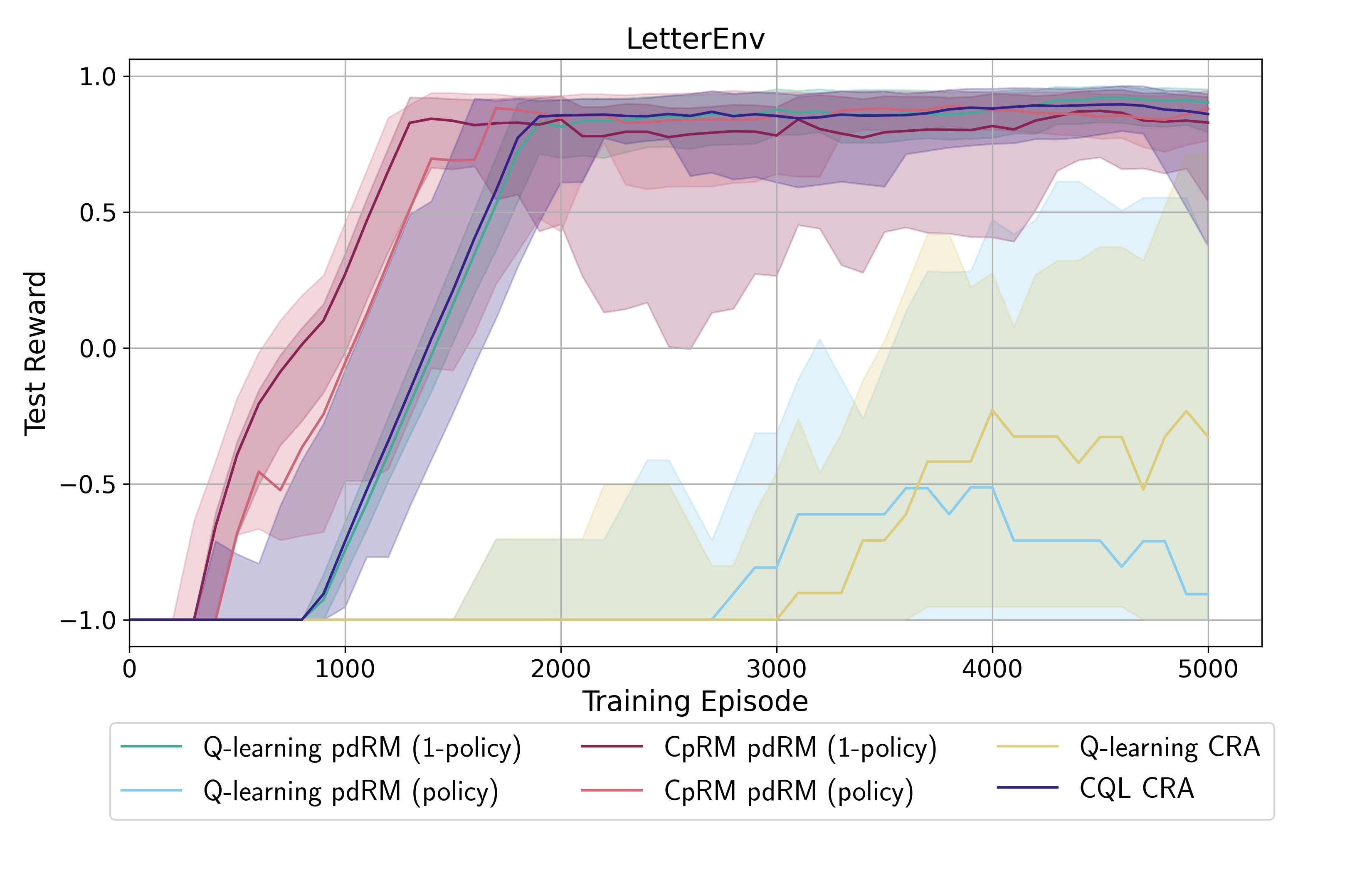}
            \caption{\textsc{LetterEnv} results, comparing agents trained with a \pdRM~and with a CRA. This shows how \pdRMs can be used to encode part of the tasks encodable by CRAs.}
            \label{fig:letterenv-experiment}
        \end{figure}

\begin{figure*}[tb]      
\centering
        \includegraphics[width=\linewidth]{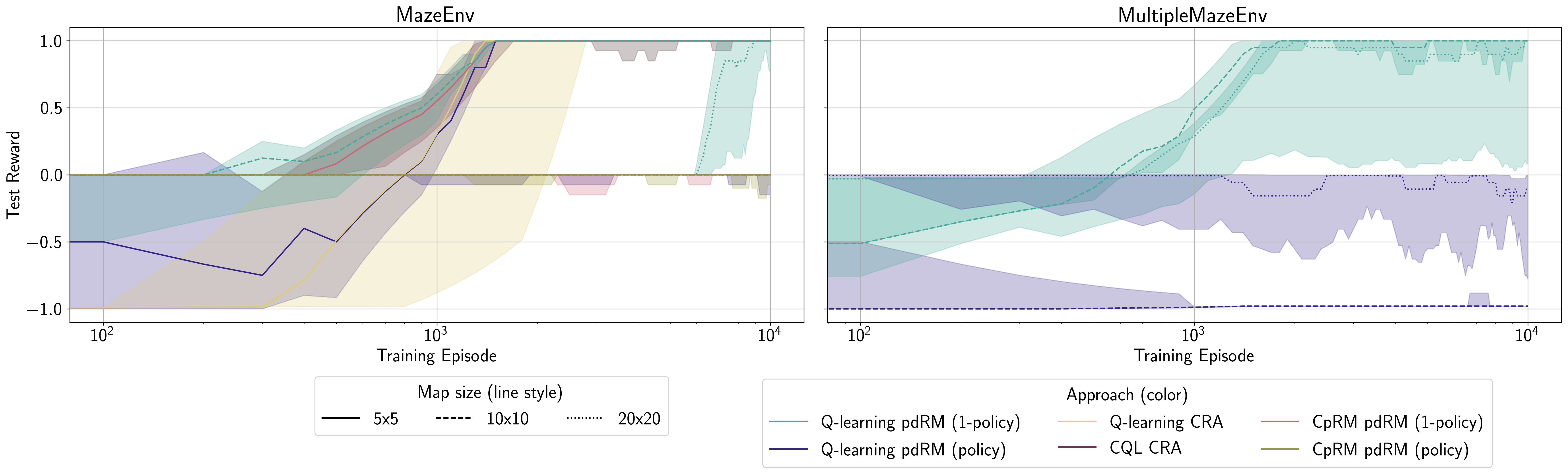}
        \caption{\textsc{1-TreasureMaze} (left) and \textsc{MultipleTreasureMaze} (right) results. We provide individual plots for each maze in \Cref{sec:Appendix-E} of the Appendix. In both plots, each maze is identified by a line style, and each agent by a colour. The \textsc{1-TreasureMaze} plot shows how the 1-\pdRM agent could achieve the task on all mazes, whereas the \pdRM and the Q-learning CRA agents only on the smallest maze. The -\CpdRM agents only managed to achieve the task in the smallest maze; in the other cases training timed out due to the time required to perform the policy updates. The CQL-CRA agent never managed to achieve the task. For the \textsc{MultipleTreasureMaze} experiment, we include only results from the 1-\pdRM and \pdRM agents trained with vanilla Q-learning. As can be seen, on mazes the 1-\pdRM agent learnt to achieve the task, whereas the \pdRM agent did not. We believe this is due to the smaller size of the $1$-policy.}
        \label{fig:maze-experiment}
\end{figure*}
        
        The \textsc{LetterEnv} domain is a domain introduced in the original paper on counting reward automata \cite{Bester//:24}. 
        The environment is a gridworld where the agent can observe three different events $A, B,$ and $C$ in specific cells. Initially, only the events $A$ and $C$ can be observed. Every time that the agent observes the event $A$ by visiting the corresponding cell, there is a probability of $\frac{1}{2}$ that the cell becomes labelled with the event $B$ from that timestep onwards. The task consists in repeatedly observing the event $A$ until the agent observes the event $B$, and then observing the event $C$ for the same number of times that the event $A$ was observed. We include this domain to show that \pdRMs can be used to obtain comparable results to CRAs when encoding tasks representable in deterministic context-free languages (DCFLs).
        
        For this experiment, we have used the implementation and experimental setup from Bester et al.'s Github repository as of May 12$^{\text{th}}$ 2025.\footnote{\href{https://github.com/TristanBester/counting-reward-machines}{https://github.com/TristanBester/counting-reward-machines}} Additional documentation is also available online.\footnote{\href{https://crm-74a68705.mintlify.app}{https://crm-74a68705.mintlify.app}} We trained six agents. Two were trained with the CRA from the repository of \citeauthor{Bester//:24}, one using Q-learning and the other using CQL, a variant of the original QRM algorithm~\cite{Toro-jair22} that uses counterfactual experiences to improve sample efficiency, adapted by \citeauthor{Bester//:24} (\citeyear{Bester//:24}) for CRAs. We refer to the two CRA-based agents as the \emph{Q-CRA agent} and \emph{CQL-CRA agent} respectively.        
        For the \pdRM-based agents, we trained a $1$-policy and a policy agent with vanilla Q-learning, and a $1$-policy and a policy agent with \CpdRM Q-learning. We refer to these agents as the \emph{1-\pdRM agent}, \emph{\pdRM agent}, \emph{1-\CpdRM} agent, and \emph{\CpdRM agent} respectively, and write \emph{-\pdRM} and \emph{-\CpdRM agents} to refer to all agents trained with the respective variant of Q-learning. 
        
        The results of the experiment are shown in \cref{fig:letterenv-experiment}. As can be seen, the agents that achieve the best performance are the -\CpdRM agents, the 1-\pdRM agent, and the CQL-CRA agent. In contrast, the Q-CRA and the \pdRM agents manage to increase their accrued rewards only towards the end of the training episodes; however, they both cannot match the performance of the other agents (although, given enough time, they will eventually reach the same performance). For the agents trained with the \pdRM, the results suggest that the smaller state space of the 1-\pdRM agent allows it to  more easily learn to achieve the task compared to the \pdRM agent; moreover, the smaller state space allow the 1-\pdRM agent to converge as fast as the CQL-CRA agent, which is trained with counterfactual experiences. Note how the 1-\CpdRM and the \CpdRM agents converged faster than the rest, and how the latter agent achieved the task as opposed to the \pdRM agent which did not. This shows that \CpdRM can increase sample efficiency. The results suggest that \pdRMs can be used in place of CRAs when tasks can be encoded in DCFLs.
                   
	\subsection{\textsc{1-TreasureMaze}}

\begin{figure*}[tb]
    \includegraphics[width=\linewidth]{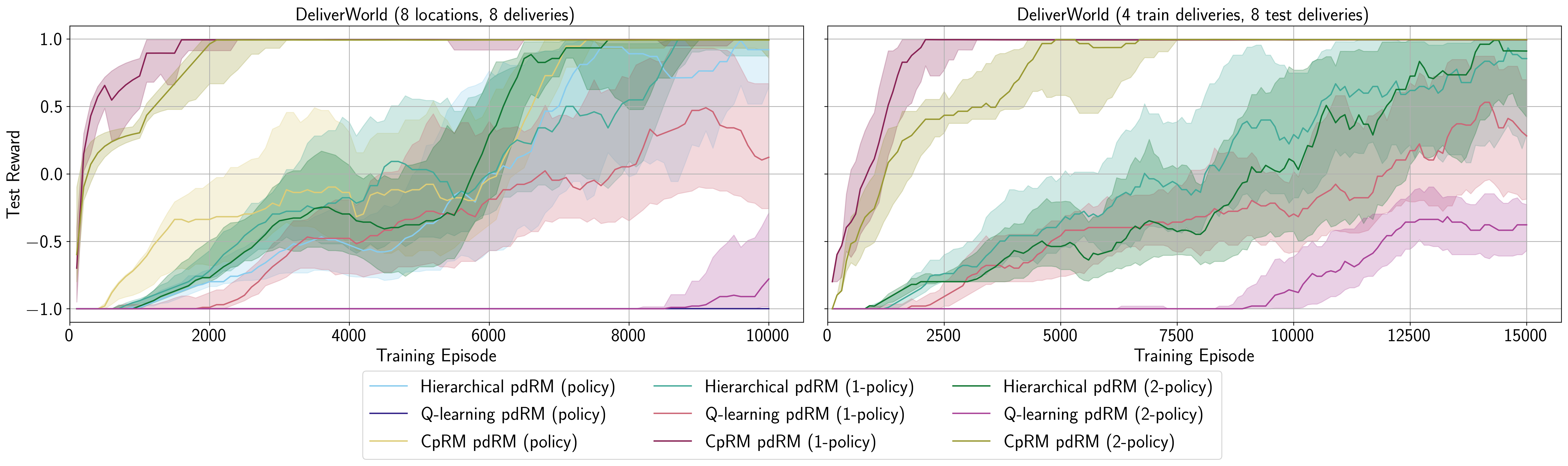}
    \caption{\textsc{DeliverWorld} results. Left plot: agents performed 8 deliveries during training and testing (\textsc{DeliverWorld-8}). Right plot: agents performed 4 deliveries during training episodes and 8 during testing episodes (\textsc{DeliverWorld-4-8}). For \textsc{DeliverWorld-8}, the only agents that consistently achieve the task at the end of training are the -\CpdRM agents, the Q-learning 1-\pdRM agent and the hierarchical 1-\pdRM and 2-\pdRM agents. The hierarchical \pdRM agent manages to achieve the task but not as consistently. The other Q-learning agents show the worse performance out of all agents, with the \pdRM agent flatlining at a reward of $-1$. On the other hand, for \textsc{DeliverWorld-4-8}, only the -\CpdRM agents and the hierarchical agents eventually learn to consistently achieve the task. This shows how \pdRMs can help in obtaining agents that complete longer tasks than the ones they were trained in.
    }
    \label{fig:deliverworld-experiment}
\end{figure*}

		In the \textsc{1-TreasureMaze} environment, the goal of the agent is to navigate a maze to find a treasure, retrieve it, and then return to the initial location by following the reversed path it took to find the treasure. For a detailed explanation of the environment, we refer the reader to the example in \cref{sec:pushdown-reward-machines}. The aim of this experiment is to show there are cases where \pdRMs are more sampe efficient than CRAs.

        For this experiment, we trained the same agents as in the \textsc{LetterEnv} experiment. The CRA encodes the task by encoding the path the agent is following to reach the treasure, similarly to the CM for the marked palindrome language in \cref{sec:theoretical-results}. At each step, the counters of the CRA are updated so that their configuration correctly encodes the path taken so far. As there are four possible directions identifying each step in the path, the number encoding a path is the translation in base $10$ of a number in base $4$. We define $0$, $1$, $2$, and $3$ to be respectively the directions $\mathsf{u}, \mathsf{d}, \mathsf{l}, $ and $\mathsf{r}$. Thus, at step $i$, we add to the current encoding of the path the value $4^i$ times the direction's value. In order to add this value, note how the CRA must repeatedly add $4$'s, as it can only add constant values in each of its transitions.
        
        Three different mazes were used: a 5$\times$5 maze, a 10$\times$10 maze, and a 20$\times$20. 
        In this experiment, we end episodes when either the maximum number of timesteps has elapsed, or a limit on wall clock time has been reached. In the left plot in \cref{fig:maze-experiment}, we show the results obtained on all mazes for the \textsc{1-TreasureMaze} experiment. We identify each agent with a colour, and each maze with a line style. In \Cref{sec:Appendix-E} of the Appendix, we provide further details on the maximum timesteps and wall clock limit per episode, and three further plots, each containing the results of the agents on each of the three mazes.

        As can be seen (left plot in \cref{fig:maze-experiment}), all the -\pdRM agents managed to achieve the task in the $5\times5$ maze. However, only the 1-\pdRM agent managed to learn to also achieve the task in the $10 \times 10$ and $20 \times 20$ mazes. We believe this is due to the fact that the state space for the 1-\pdRM agent is much smaller than that for the \pdRM agent.  The -\CpdRM agents could not learn to achieve the task as the training episodes ended prematurely due to the wall clock time limit. This is because of the number of counterfactual updates, which in this scenario is large given the size of the stack language. Note that it can be shown using the approach in Section \ref{sec:when-can-we-learn} that optimal 1-policies learnt by the 1-\pdRM agent have the same state values as optimal \pdRM and -\CpdRM policies.
        
        The Q-CRA agent learned to achieve the task in the 5$\times$5 maze, but not in the larger mazes. We believe the agent could not learn to achieve the task in the larger mazes because the counter values incur an exponential blowup with respect to the number of elapsed timesteps, resulting in an exponential number of operations per timestep and the episodes to end prematurely due to the wall clock time limit. Finally, the CQL-CRA agent never learnt to achieve the task. In addition to having the same issue with respect to the exponential blowup in the number of CRA operations that we observed in the Q-CRA agent, the CQL-CRA agent's training was further slowed down by the counterfactual policy updates. There are exponentially many counterfactual policy updates as the number of possible combinations of counter values is exponential in the maximum number of timesteps (due to an argument similar to the one we have given to establish \cref{thm:cra-space-blowup-omega}).

        In summary, the experiment shows that there are scenarios where \pdRMs are more suited to encode tasks and train agents than CRAs. Note that we do not claim this to always be the case, and believe that it should be decided on a case-by-case basis which of the two machines is more appropriate.
        
	\subsection{\textsc{MultipleTreasureMaze}}
        
		The \textsc{MultipleTreasureMaze} environment is a variant of the previous environment where the agent has to retrieve multiple treasures. 
        Before retrieving the treasures, the agent has to first find an intermediate ``safe'' location. After finding a safe location, the agent starts looking for treasures: as soon as it finds one, it must return to the safe location by following the reversed path it took to find such treasure. This is repeated until the agent finds all treasures, after which the agent observes a special event informing it that it has found all treasures. The agent must then return from the safe location to the initial exit location, by following the reversed path it took to reach the safe location. We include this experiment to show how \pdRMs can enable learning of more complex tasks.

        In this experiment we have trained only the -\pdRM agents. We have excluded the -\CpdRM and CRA-based agents because the wall clock time limit would have stopped the training episodes prematurely as in the \textsc{1-TreasureMaze} environment. Only a 10$\times$10 maze and a 20$\times$20 one were used. Note that, for this experiment, the maximum number of timesteps per episode for each maze is larger than in the \textsc{1-TreasureMaze} experiments.
        
        As can be seen (right plot in \cref{fig:maze-experiment}), the 1-\pdRM agent managed to eventually achieve the task in both mazes. Given the longer episodes compared to the \textsc{1-TreasureMaze} domain, the agents were also able to learn to consistently achieve the task in a relatively small number of episodes. This shows that a \pdRM can also be used in tasks where more complex operations with the stack are required. On the other hand, the \pdRM agent did not manage to learn to complete the task by the end of training in either maze. We believe that this is mainly due to the size of the policy, which seems to be too large for the agent to learn in this more complex task.

        The results show that, thanks to the flexible amount of stack information that \pdRMs allow in defining the agent's policy, \pdRMs allow us to train agents in more complex tasks.

    \subsection{\textsc{DeliverWorld}}
\begin{figure*}[tb]
        \includegraphics[width=\linewidth]{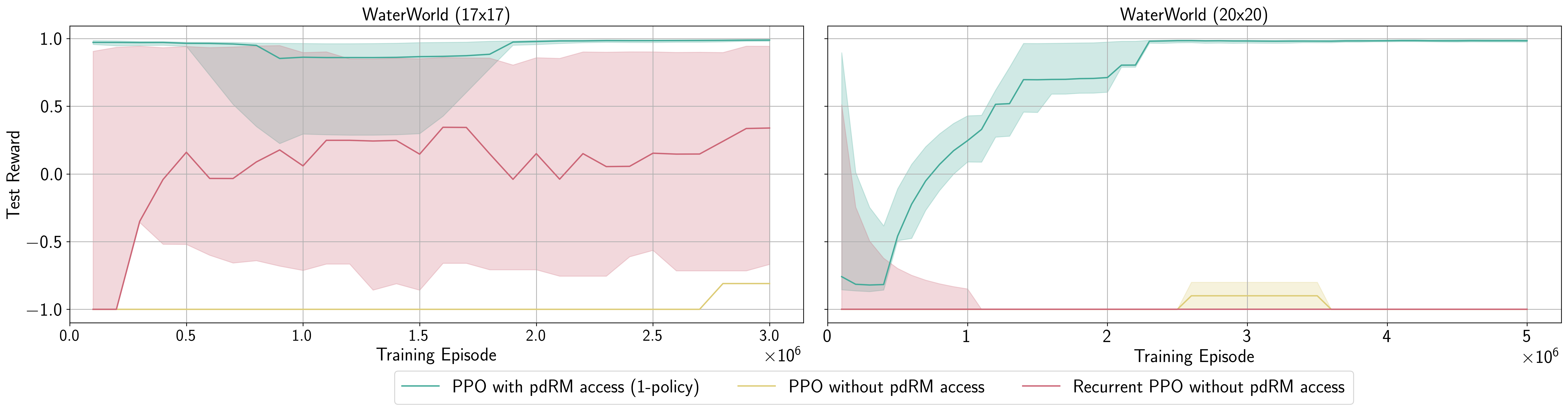}
        \caption{\textsc{WaterWorld} results, 17$\times$17 map (left) and 20$\times$20 map (right). We compare the performance of a 1-\pdRM agent trained with PPO against that of an agent trained with PPO and one of an agent trained with recurrent PPO. In the 17$\times$17 map, the 1-\pdRM agent is able to achieve the task very quickly. Only the recurrent PPO agent manages to considerably improve its performance, but does not match that of the 1-\pdRM agent. Similarly to the results of the 17$\times$17 map, in the 20$\times$20 map the 1-\pdRM agent is the only one that consistently achieves the task; significanly outperforming the other agents which do not improve their performance. These two plots show how the \pdRM is crucial in training agents to achieve this task.
        }
        \label{fig:waterworld-experiment}
\end{figure*}

        In \textsc{DeliverWorld}, the agent that is supposed to deliver packages to locations. Each location is assigned a ``type'' (e.g., shopping mall, clothing store, etc.), and there can be multiple locations of the same type. At the start of each episode, the agent observes an event which determines the sequence of delivery location types it needs to visit to achieve the task. This sequence is chosen randomly from a set. The aim of this experiment is to illustrate that there are scenarios where it is beneficial for the agent to have access to more than the top symbol on the stack. In this experiment, we show how \CpdRM and a hierarchical approach can improve sample efficiency. Moreover, in a variant of the experiment, we show how \pdRMs allow training of agents that can perform a larger number of subtasks in testing than in training episodes.
        
        In this experiment we have compared three sets of agents, all trained with access to the same \pdRM: one was trained with vanilla Q-learning, one with \CpdRM Q-learning, and the last with a hierarchical approach. 

        For the hierarchical approach, we have adapted the hierarchical algorithm proposed in \cite{Toro-jair22}. In our \pdRM-based version, the meta-policy has access to the current MDP state, \pdRM state and the stack of the \pdRM. The options' policies have access to the current MDP state, \pdRM state, and only the topmost symbol on the \pdRM stack. 
        
        We have used a 20$\times$20 grid, with two possible setups for the number of deliveries during training and the number of deliveries during testing. In the first, which we call \textsc{DeliverWorld-8}, the agents perform 8 deliveries during training and testing episodes. In the second, which we dub \textsc{DeliverWorld-4-8}, the agents perform 4 deliveries during training episodes and 8 during testing episodes.
        In \textsc{DeliverWorld-8}, we trained three agents per approach: a 1-policy agent, a 2-policy agent (denoted \emph{2-\pdRM} agent and \emph{2-\CpdRM} agent for the Q-learning approaches), and a policy agent.
        In \textsc{DeliverWorld-4-8}, we have trained two agents per approach: a 1-policy and a 2-policy agent. 

        \Cref{fig:deliverworld-experiment} shows the plots for both experiments. In both plots, we can clearly see that the most sample efficient agents are the -\CpdRM ones, followed by the hierarchical ones, and with the vanilla Q-learning ones coming last.

        In both scenarios, the -\CpdRM agents clearly outperform all other agents in speed of convergence. Notice how \CpdRM and the hierarchical approach improve the performance compared to the \pdRM agent in the \textsc{DeliverWorld-8} task: when trained with vanilla Q-learning, the \pdRM agent never learns to achieve the task; however the \CpdRM and hierarchical agent both do. This shows how the hierarchical approach and, when the stack language is not too large, \CpdRM can greatly increase sample efficiency.
            
        In the hierarchical approach, we can see that in the plot of \textsc{DeliverWorld-8} (left in \cref{fig:deliverworld-experiment}) the 2-\pdRM converges faster. We believe that this is due to the fact that, by having access to the top two symbols, the agent learns to correctly plan which delivery location type to visit first. On the other hand, in the plot of \textsc{DeliverWorld-4-8} (right in \cref{fig:deliverworld-experiment}), the 1-\pdRM and the 2-\pdRM seem to converge at the same pace. For this setup, we think that the fact that the agent needs to perform fewer deliveries during training penalizes agents with larger policy state spaces. Shorter training episodes (compared to the testing ones) imply that the agents can explore fewer states during training, thus leading to worse performance for the 2-\pdRM agent compared to the results it obtained in \textsc{DeliverWorld-8}. 

        Finally, for the Q-learning agents, the 1-\pdRM agent has the best performance in both plots. In principle, the other two Q-learning agents should be able to learn a policy that is at least as good as that of the 1-\pdRM agent. Note that of the Q-learning agents, the only one which is guaranteed to achieve an optimal policy in the limit is the \pdRM agent (see \cref{sec:when-can-we-learn}). However, given the state-space complexity of the policies and the limited number of episodes, the 2-\pdRM and \pdRM agents do not learn to achieve the task. 

        Interestingly, from the plot of \textsc{DeliverWorld-4-8}, we can see that agents trained with \pdRMs learn policies that, even though trained in smaller instances, performed adequately in larger testing instances. The -\CpdRM agents are able to achieve the test task relatively quickly, whereas the hierarchical agents eventually manage to achieve the test task by the end of training. Moreover, although the Q-learning agents do not achieve the test task, their testing rewards increase over time. We conjecture that with a longer training they should also converge to policies that achieve the test task. This suggests that \pdRMs can, to a certain degree, help agents to learn policies that can achieve longer tasks than the ones that they are trained in. 
        
    \subsection{\textsc{WaterWorld}}
        The last domain is based on the \textsc{WaterWorld} domain \cite{Karpathy-waterworld//:15}. The task consists of two steps. In the first, the agent must touch 8 balls. Each of these balls is labelled with a (unique) number from 2 to 9: whenever the agent touches the ball identified with $i$, the \pdRM pushes the parity of $i$ to the stack and the ball disappears. Once all 8 balls are touched, the \pdRM moves to a new state and the second phase of the task begins. In this phase, the agent has to touch one of two other balls, numbered  0 and 1 determined by the topmost parity on the stack. When the agent correctly touches the ball, it is moved to a new random location and the topmost symbol of the stack is popped. The task is considered achieved when the stack has been emptied. 

        The aim of this experiment is to evaluate whether \pdRMs can provide an advantage in continuous domains when using a deep algorithm. We trained three agents using Proximal Policy Optimization (PPO) \cite{Schulman//:17}: the first has access to the \pdRM that encodes the task (and is trained with a 1-policy), the second does not, and the third also does not have access to the \pdRM, but is trained with \emph{recurrent} PPO, \ie an implementation of PPO where both the actor and critic networks employ \emph{long short-term memory} (LSTM) units \cite{Hochreiter//:97}. LSTMs are a type of recurrent neural network, meaning they can capture sequential dependencies in data. Intuitively, using an LSTM should help the agent ``remember'' important events that have happened during the episode. Note that, as PPO is an on-policy algorithm, \CpdRM cannot be used. Instead, we used Soft Actor Critic (SAC) in conjunction with \CpdRM; however, the training of the agents did not terminate, as the server we used to run the \textsc{WaterWorld} experiments had a time limit on each job. We therefore do not report the results of agents trained with the counterfactual approach for this experiment.
        
        We have trained the agents in two different maps, a 17$\times$17 one and a 20$\times$20 one, and the results are shown in the left and right plot of \cref{fig:waterworld-experiment} respectively.
        
        As can be seen, in the 17$\times$17 map (left in \cref{fig:waterworld-experiment}) the agent trained with the \pdRM was able to achieve the task very quickly, and while it showed a small decrease in performance between the 5$^{\text{th}}$ and 20$^{\text{th}}$ episodes, it recovered and managed to outperform its initial results. The agent trained with recurrent PPO managed to improve its policy, but by the end of training its performance still showed high variance. Finally, the vanilla PPO agent only showed an increase in performance at the end of training, however, its resulting performance was not comparable to that of the 1-\pdRM agent or the recurrent PPO agent.
        In the 20$\times$20 map (right in \cref{fig:waterworld-experiment}) neither the recurrent PPO nor vanilla PPO agents learnt a policy that achieved the task in the test episodes. However, the agent with access to the \pdRM could learn a policy that consistently achieved the task in the test episodes from the 20$^{\text{th}}$ training episode. The results suggest that, in this scenario, using a \pdRM is crucial in training and in increasing sample efficiency, performing better than deep algorithms employing recurrent networks.  

\section{Related Work}
\label{sec:related-works}
The literature on reward machines is now quite large.
We focus specifically on work in which reward machines are modified in a way that is similar to our approach. The approach of \citet{Bester//:24} is closest to ours and is discussed in \cref{sec:theoretical-results}. The \emph{task monitors} of \citet{JothimuruganNeurIPS2019composable} are automata with numeric registers (and so are similar to \citeauthor{Bester//:24}'s counting reward automata), but the purpose of the registers was not to define more complicated temporal patterns but to keep track of the quantitative \emph{degrees} to which subtasks had been completed and constraints satisfied (so as to provide shaped rewards). \citet{Furelos//:23} augment reward machines by introducing a \emph{hierarchy}. In such hierarchies, RMs are able to call other RMs during execution. However, as they assume that there is always a \emph{leaf} RM that cannot call other RMs, and that each RM cannot be called by itself, even via recursive calls, they cannot express all tasks representable in DCFLs.
Another modification of reward machines, \emph{First-Order Reward Machines} (FORMs), is proposed by \citet{Ardon//:25}, where transitions are labelled by first-order logic formulas. However, FORMs only increase the expressivity of the events labelling transitions in RMs. On the other hand, our increased expressivity lies in the set of tasks that can be encoded by \pdRMs, which is strictly larger than the set of tasks encodable by RMs.

Other work has proposed approaches where RL agents are trained to achieve tasks that can be represented as deterministic context-free languages. In particular, \citeauthor{Hahn//:22} (\citeyear{Hahn//:22}) introduce recursive reinforcement learning, where \emph{recursive} MDPs (RMDPs) model the environment in which agents act. RMDPs generalise MDPs in that they consist of a set of MDPs where each MDP can ``call'' other MDPs. 
By keeping a stack of calls, RMDPs can encode tasks representable as DCFLs. Indeed, the authors specifically mention \emph{context-free reward machines} as a possible application of recursive RL; however they do not provide a formal argument. While it would be interesting to formally connect recursive RL to \pdRMs, this lies outside of the scope of this work and we leave it to future research.

\section{Conclusions}
\label{sec:conclusions}
In this paper, we have presented \emph{pushdown reward machines}, an extension of reward machines which can encode non-Markovian tasks representable as deterministic context-free languages. Compared to reward machines, \pdRMs are thus able to encode a strictly larger set of tasks. We have proposed two policy types for \pdRMs, one where the agent has access to the full stack (\emph{policies}), and one where it can access only its top $k$ symbols (\emph{$k$-policies}). In general, the state values of an optimal $k$-policy might not be as high as those of a policy. We described a procedure to check whether an MDP and a \pdRM are such that the two policy types have the same optimal state values. We have also compared \pdRMs to counting reward automata \cite{Bester//:24}, another extension of reward machines capable of encoding tasks representable as any recursively enumerable language. We showed that, when an agent trained with a \pdRM has access only to the top symbol of the \pdRM's stack, the size of a 1-policy can be  exponentially smaller (with respect to the episode's maximum length) than the size of a policy an agent trained with a CRA for the same task. Finally, in the experimental evaluation, we have shown how \pdRMs can be used in practice. We have seen how in certain scenarios it is more convenient to use \pdRMs than CRAs. We have also provided counterfactual and hierarchical algorithms specifically tailored for \pdRMs, and saw how they can increase convergence speed. Finally, we have used \pdRMs in a continuous domain, and showed how they can outperform state-of-the-art algorithms employing recurrent neural networks.

There are several directions for future work. 
First, we plan to investigate the mixed performance of \CpdRM in the \textsc{1-TreasureMaze} and \textsc{WaterWorld} tasks.
It would also be interesting to automatically synthesise \pdRMs, as was done with reward machines via, e.g., logic formalisms \cite{Camacho//:19,Varricchione//:23}, search and learning \cite{Toro-neurips19,Toro-aij23,Furelos//:23,HasanbeigJAMK24} or planning \cite{Illanes//:19,Varricchione//:24}. As deterministic pushdown automata are the underlying structure of \pdRMs, we think LR($k$) grammars \cite{Knuth//:65} could be good candidates to synthesise \pdRMs, as they can be easily translated into DPDAs \cite{Aho//:73}. 
Finally, as the stack can provide the agent with even further memory, \pdRMs can be an interesting alternative approach to RMs \cite{Toro-neurips19,Toro-aij23} in dealing with partially observable environments.

\section*{Acknowledgements}
\label{sec:acknowledgements}
We thank the anonymous reviewers for their helpful comments. The second and final authors gratefully acknowledge funding from the Natural Sciences and
Engineering Research Council of Canada (NSERC) and the Canada CIFAR AI Chairs
Program. Resources used in preparing this research
were provided, in part, by the Province of Ontario, the Government of
Canada through CIFAR, and companies sponsoring the Vector Institute for
Artificial Intelligence (www.vectorinstitute.ai/partners). Finally, the second and final authors
thank the Schwartz Reisman Institute for Technology and Society for
providing a rich multi-disciplinary research environment.

\bibliographystyle{kr}
\bibliography{biblio.bib}

@incollection{Feinberg2011,
  author = {Feinberg, Eugene A.},
  title = {Total expected discounted reward {MDP}s: {Existence} of optimal policies},
  year = {2011},
  editor = {J. J. Cochran},
  booktitle = {Wiley Encyclopedia of Operations Research and Management Science},
  publisher = {John Wiley \& Sons, Hoboken, NJ},
}

@book{Minsky//:67,
    title={Computation},
    author={Minsky, Marvin Lee},
    year={1967},
    publisher={Prentice-Hall Englewood Cliffs}
}

@article{Fischer//:68,
    title={Counter machines and counter languages},
    author={Fischer, Patrick C and Meyer, Albert R and Rosenberg, Arnold L},
    journal={Mathematical Systems Theory},
    volume={2},
    number={3},
    pages={265--283},
    year={1968},
    publisher={Springer}
}

@article{Bester//:24,
    title={Counting Reward Automata: Sample Efficient Reinforcement Learning Through the Exploitation of Reward Function Structure}, 
    author={Tristan Bester and Benjamin Rosman and Steven James and Geraud Nangue Tasse},
    year={2024},
    journal={arXiv preprint arXiv:2312.11364},
    url={https://arxiv.org/abs/2312.11364}, 
}

@inproceedings{Toro-neurips19,
    author =	 {{Toro Icarte}, Rodrigo and Ethan Waldie and Klassen,
                  Toryn Q. and Richard Anthony Valenzano and Castro,
                  Margarita P. and McIlraith, Sheila A.},
    title =	 {Learning Reward Machines for Partially Observable
                  Reinforcement Learning},
    booktitle =	 {Advances in Neural Information Processing Systems
                  32: Annual Conference on Neural Information
                  Processing Systems 2019},
    remove-editor =	 {Hanna M. Wallach and Hugo Larochelle and Alina
                  Beygelzimer and Florence d'Alch{\'{e}}{-}Buc and
                  Emily B. Fox and Roman Garnett},
    pages =	 {15497--15508},
    year =	 2019,
}

@inproceedings{Toro-icml18,
  author = {Rodrigo {Toro Icarte} and
            Toryn Q. Klassen and
            Richard Valenzano and
            Sheila A. McIlraith},
  title     = {Using Reward Machines for High-Level Task Specification and Decomposition in Reinforcement Learning},
  booktitle = {Proceedings of the 35th International Conference on Machine Learning (ICML 2018)},
  year      = {2018},
  pages     = {2112--2121}
}

@article{Toro-jair22,
    title={Reward machines: Exploiting reward function structure in reinforcement learning},
    author = {Rodrigo {Toro Icarte} and
            Toryn Q. Klassen and
            Richard Valenzano and
            Sheila A. McIlraith},
    journal={Journal of Artificial Intelligence Research},
    volume={73},
    pages={173--208},
    year={2022}
}

@article{Toro-aij23,
    author = {Rodrigo {Toro Icarte} and Toryn Q. Klassen and Richard Valenzano and Margarita P. Castro and Ethan Waldie and Sheila A. McIlraith},
    title = {Learning reward machines: {A} study in partially observable reinforcement learning},
    journal = {Artificial Intelligence},
    volume = {323},
    pages = {103989},
    year = {2023},
    doi = {10.1016/J.ARTINT.2023.103989}
}

@book{puterman2014markov,
  title={Markov Decision Processes: Discrete Stochastic Dynamic Programming},
  author={Puterman, Martin L},
  year={2014},
  publisher={John Wiley \& Sons}
}

@inproceedings{Illanes//:19,
    title =	 {Symbolic Planning and Model-Free Reinforcement
                  Learning: Training Taskable Agents},
    booktitle =	 {4th Multidisciplinary Conference
                  on Reinforcement Learning and Decision Making
                  ({RLDM}-19)},
    author =	 {Illanes, Le{\'o}n and Yan, Xi and {Toro Icarte},
                  Rodrigo and McIlraith, Sheila A.},
    year =	 2019,
    pages={191--195}
}

@article{HasanbeigJAMK24,
  author       = {Hosein Hasanbeig and
                  Natasha Yogananda Jeppu and
                  Alessandro Abate and
                  Tom Melham and
                  Daniel Kroening},
  title        = {Symbolic Task Inference in Deep Reinforcement Learning},
  journal      = {J. Artif. Intell. Res.},
  volume       = {80},
  pages        = {1099--1137},
  year         = {2024},
  url          = {https://doi.org/10.1613/jair.1.14063}
}

@inproceedings{Camacho//:19,
    title     = {{LTL} and Beyond: Formal Languages for Reward Function Specification in Reinforcement Learning},
    author    = {Camacho, Alberto and {Toro Icarte}, Rodrigo and Klassen, Toryn Q. and Valenzano, Richard and McIlraith, Sheila A.},
    booktitle = {Proceedings of the 28th International Joint Conference on
               Artificial Intelligence, {IJCAI-19}},
    pages     = {6065--6073},
    year      = {2019},
    doi       = {10.24963/ijcai.2019/840},
    url       = {https://doi.org/10.24963/ijcai.2019/840},
}

@article{Hahn//:22,
    title={Recursive reinforcement learning},
    author={Hahn, Ernst Moritz and Perez, Mateo and Schewe, Sven and Somenzi, Fabio and Trivedi, Ashutosh and Wojtczak, Dominik},
    journal={Advances in Neural Information Processing Systems},
    volume={35},
    pages={35519--35532},
    year={2022}
}

@incollection{Chomsky//:59,
    title={The algebraic theory of context-free languages},
    author={Chomsky, Noam and Sch{\"u}tzenberger, Marcel P},
    booktitle={Studies in Logic and the Foundations of Mathematics},
    volume={26},
    pages={118--161},
    year={1959},
    publisher={Elsevier}
}

@article{Schulman//:17,
    title={Proximal Policy Optimization Algorithms}, 
    author={John Schulman and Filip Wolski and Prafulla Dhariwal and Alec Radford and Oleg Klimov},
    year={2017},
    journal={arXiv preprint arXiv:1707.06347},
}

@article{Hochreiter//:97,
    author={Hochreiter, Sepp and Schmidhuber, Jürgen},
    journal={Neural Computation}, 
    title={Long Short-Term Memory}, 
    year={1997},
    volume={9},
    number={8},
    pages={1735-1780},
    keywords={},
    doi={10.1162/neco.1997.9.8.1735}
}

@article{Raffin//:21,
  title={Stable-{B}aselines3: Reliable reinforcement learning implementations},
  author={Raffin, Antonin and Hill, Ashley and Gleave, Adam and Kanervisto, Anssi and Ernestus, Maximilian and Dormann, Noah},
  journal={Journal of Machine Learning Research},
  volume={22},
  number={268},
  pages={1--8},
  year={2021}
}

@article{Fischer//:66,
  title={Turing machines with restricted memory access},
  author={Fischer, Patrick C},
  journal={Information and Control},
  volume={9},
  number={4},
  pages={364--379},
  year={1966},
  publisher={Elsevier}
}

@article{Towers//:24,
  title={Gymnasium: A standard interface for reinforcement learning environments},
  author={Towers, Mark and Kwiatkowski, Ariel and Terry, Jordan and Balis, John U and De Cola, Gianluca and Deleu, Tristan and Goulao, Manuel and Kallinteris, Andreas and Krimmel, Markus and KG, Arjun and others},
  journal={arXiv preprint arXiv:2407.17032},
  year={2024}
}

@inproceedings{Furelos//:23,
  title={Hierarchies of reward machines},
  author={Furelos-Blanco, Daniel and Law, Mark and Jonsson, Anders and Broda, Krysia and Russo, Alessandra},
  booktitle={International Conference on Machine Learning},
  pages={10494--10541},
  year={2023},
  organization={PMLR}
}

@article{Ardon//:25,
      title={{FORM}: Learning Expressive and Transferable First-Order Logic Reward Machines}, 
      author={Leo Ardon and Daniel Furelos-Blanco and Roko Parac and Alessandra Russo},
      year={2025},
      journal={arXiv preprint arXiv:2501.00364},
      url={https://arxiv.org/abs/2501.00364}, 
}

@article{Knuth//:65,
title = {On the translation of languages from left to right},
journal = {Information and Control},
volume = {8},
number = {6},
pages = {607-639},
year = {1965},
author = {Donald E. Knuth},
}

@inproceedings{Varricchione//:24,
  title={Maximally permissive reward machines},
  author={Varricchione, Giovanni and Alechina, Natasha and Dastani, Mehdi and Logan, Brian},
  booktitle={Proceedings of the 27th European Conference on Artificial Intelligence ({ECAI 2024})},
  pages={1181--1188},
  publisher={IOS Press},
  year={2024},
}

@book{Aho//:73,
  title={The Theory of Parsing, Translation, and Compiling},
  author={Aho, Alfred V and Ullman, Jeffrey D},
  volume={1},
  year={1973},
  publisher={Prentice-Hall Englewood Cliffs, NJ}
}

@article{Watkins-qlearning//:92,
  title={Q-learning},
  author={Watkins, Christopher JCH and Dayan, Peter},
  journal={Machine learning},
  volume={8},
  pages={279--292},
  year={1992},
  publisher={Springer}
}

@misc{Karpathy-waterworld//:15,
    title={{REINFORCEjs}: WaterWorld demo},
    author={Karpathy, Andrej},
    year={2015},
    howpublished={\url{https://cs.stanford.edu/people/karpathy/reinforcejs/waterworld.html}}
}

@inproceedings{JothimuruganNeurIPS2019composable,
  author       = {Kishor Jothimurugan and
                  Rajeev Alur and
                  Osbert Bastani},
  title        = {A Composable Specification Language for Reinforcement Learning Tasks},
  booktitle    = {Advances in Neural Information Processing Systems 32},
  pages        = {13021--13030},
  year         = {2019},
  url          = {https://proceedings.neurips.cc/paper/2019/hash/f5aa4bd09c07d8b2f65bad6c7cd3358f-Abstract.html}
}

@inproceedings{Varricchione//:23,
author="Varricchione, Giovanni
and Alechina, Natasha
and Dastani, Mehdi
and Logan, Brian",
editor="Malvone, Vadim
and Murano, Aniello",
title="Synthesising Reward Machines for Cooperative Multi-Agent Reinforcement Learning",
booktitle="Proceedings of the 20th European Conference on Multi-Agent Systems (EUMAS 2023)",
year="2023",
publisher="Springer Nature Switzerland",
pages="328--344",
}

\clearpage

\section*{Appendix}
\label{sec:appendix}
\renewcommand{\thesection}{\Alph{section}}
\setcounter{section}{0}
\renewcommand*{\theHsection}{chX.\the\value{section}}

\section{Example Translation From CRA to pdRM}\label{sec:Appendix-A}
    In this section of the Appendix, we provide an example of a translation from a 1-counter CRA to a pdRM. Specifically, we consider the CRA given as an example in the work by Bester et al. for the \textsc{LetterEnv} task. The CRA is shown in \cref{fig:letterenv-CRA}.
    
    \begin{figure}[H]
        
        \begin{tikzpicture}[shorten >=1pt,node distance=2cm,auto]

          \node[state,initial]      (u_0)                {$u_0$};
          \node[state]              (u_1) [right=of u_0] {$u_1$};
          \node[state, minimum size=0.3em, fill=black]    (u_2) [below=1cm of u_0] {};
          \node[state, minimum size=0.3em, fill=black]    (u_3) [below=1cm of u_1] {};
        
          \path[->] (u_0) edge              node        {{\scriptsize $\langle P_B, [1], [0], 0\rangle$}} (u_1)
                          edge [loop above,text width=3cm,align=center] node        {{\scriptsize $\langle P_A, [1], [1], 0\rangle$\\[.2cm]$\langle P_A, [0], [1], 0\rangle$}} ()
                          edge              node [left,text width=3cm,align=center] {{\scriptsize $\langle \neg P_A \land \neg P_B, [0], [1], 0 \rangle$\\[.2cm]$\langle \neg P_A \land \neg P_B, [1], [0], 0\rangle$}} (u_2)
                    (u_1) edge              node [right,text width=3cm,align=center] {{\scriptsize $\langle P_B, [1], [0], 0 \rangle$\\[.2cm]$\langle \tau, [0], [0], 1 \rangle$}} (u_3)
                          edge [loop above] node        {{\scriptsize $\langle P_C, [1], [-1], 0\rangle$}} ();
        \end{tikzpicture}%
        \caption{CRA for the \textsc{LetterEnv} task. The states drawn as black circles are the terminal states.}
    \label{fig:letterenv-CRA}
    \end{figure}

    An equivalent pdRM can be seen in \cref{fig:letterenv-equivalent-pdrm}. In it, we use the special symbol $\#$ to denote the end of the stack (we set the initial stack symbol $Z$ to be $\#$). The special symbol $\#$ is used to simulate the zero-test on the counter's value done in the CRA of \cref{fig:letterenv-CRA}: if the top symbol is $\#$, then the counter's value would be zero; otherwise, if the top symbol is not $\#$, then the counter's value would not be zero.

    \begin{figure}[H]
        
        \begin{tikzpicture}[shorten >=1pt,node distance=2cm,auto]

          \node[state,initial]      (u_0)                {$u_0$};
          \node[state]              (u_1) [right=of u_0] {$u_1$};
          \node[state, minimum size=0.3em, fill=black]    (u_2) [below=1cm of u_0] {};
          \node[state, minimum size=0.3em, fill=black]    (u_3) [below=1cm of u_1] {};
        
          \path[->] (u_0) edge              node        {{\scriptsize $\{ P_B\}, A/A, 0$}} (u_1)
                          edge [loop above,text width=3cm,align=center] node        {{\scriptsize $ \{P_A\}, A / AA, 0$\\[.2cm]$\{ P_A \}, \#/A\#, 0$}} ()
                          edge              node [left,text width=3cm,align=center] {{\scriptsize $\{ \neg P_A, \neg P_B\}, \#/A, 0 $\\[.2cm]$\{ \neg P_A, \neg P_B\}, A/A, 0$}} (u_2)
                    (u_1) edge              node [right,text width=3cm,align=center] {{\scriptsize $\{ P_B \}, A/A, 0 $\\[.2cm]$\{\tau\}, \#/\#, 1$}} (u_3)
                          edge [loop above] node        {{\scriptsize $\{P_C\}, A/\epsilon, 0$}} ();
        \end{tikzpicture}%
        \caption{pdRM for the \textsc{LetterEnv} task equivalent to the CRA in \cref{fig:letterenv-CRA}.}
    \label{fig:letterenv-equivalent-pdrm}
    \end{figure}

\section{General Translation From 1-Counter CRA to pdRM}\label{sec:Appendix-B}

    In \Cref{sec:Appendix-A} of the Appendix we have seen an example of a translation from a CRA to a pdRM. In general, it is not possible to translate an arbitrary CRA to a pdRM: as CRA can have multiple counters, they can be strictly more expressive than pdRMs. However, for CRAs with one counter\footnote{It should be noted that here by ``CRA'' we are referring specifically to the so-called ``\emph{constant} CRAs'', \ie the CRAs where the output function outputs real values (rewards) and not functions. As shown by Bester et al., the two formalisms are equally expressive.} there exists always a pdRM which encodes the same non-Markovian reward function. In this section of the Appendix, we provide a formal construction that, given a 1-counter CRA, produces a pdRM that encodes the same non-Markovian reward function. 
    
    Fix a given 1-counter CRA $\mathcal{C} = \langle U, F, \Sigma, \Delta, \delta, \lambda, u_0\rangle$. From it, we build the corresponding pdRM $\RM^\mathcal{C} = \la \RMStates^\mathcal{C}, \RMInState^\mathcal{C}, \RMFinStates^\mathcal{C}, \RMAlphabet^\mathcal{C}, \PRMStackAlphabet^\mathcal{C}, \PRMStackInSymbol^\mathcal{C}, \RMTransition^\mathcal{C}, \RMReward^\mathcal{C} \ra $ as follows:

    \smallskip

    \begin{itemize}
        \item $U^\mathcal{C} := U \cup U^\delta$, where $U^\delta$ is a set of ``helper'' states used to correctly simulate decreases in the counter's value. Following, when defining the transition function $\RMTransition^\mathcal{C}$, we will properly define the set $U^\delta$;

        \item $\RMInState^\mathcal{C} := u_0$;

        \item $\RMFinStates^\mathcal{C} := \RMFinStates$;

        \item $\RMAlphabet^\mathcal{C} := \Sigma \cup \{ \epsilon \}$, where $\epsilon$ is a  symbol not in $\Sigma$;

        \item $\PRMStackAlphabet^\mathcal{C} := \{ \#, \Box\}$. The $\#$ symbol is used to denote the end of the stack and, similarly to the pdRM from \cref{sec:Appendix-A} of the Appendix, will be used to simulate the zero-test of the counter's value. The $\Box$ symbol will instead be used to represent a single unit on the counter: whenever the counter's value in the CRA is modified by adding/subtracting $n$, then the stack will be changed by pushing/popping $n$ $\Box$ symbols;

        \item $\PRMStackInSymbol^\mathcal{C} := \#$.

    \end{itemize}

    \smallskip

    We now define the transition and reward functions $\RMTransition^\mathcal{C}, \RMReward^\mathcal{C}$ of the pdRM. In order to define $\RMTransition^\mathcal{C}$, consider an arbitrary CRA configuration $\langle q, c \rangle$, where $q$ is the CRA state and $c$ the value of the CRA counter. Upon observing the input symbol $\sigma$, the CRA transitions to the configuration $ \langle q', c' \rangle = \delta(q, \sigma, \mathbb{1}(c))$ and outputs reward $ r = \lambda(q, \sigma, \mathbb{1}(c))$. Here, $u, u' \in U$ are states of the CRA, $c$ and $c'$ are the value of the CRA's counter, $\sigma \in \Sigma$ is the input symbol, and $\mathbb{1}(c)$ is the output of the zero-test function, defined as follows:
    \[
        \mathbb{1}(c) := \begin{cases}
            0 & \text{if } c = 0\\
            1 & \text{otherwise}
        \end{cases}
    \]

    As we have mentioned before, the zero-test function can be easily simulated in the pdRM by checking whether the top symbol of the stack is $\#$ or not. Hence, to define $\RMTransition^\mathcal{C}$ and $ \RMReward^\mathcal{C}$, if $\mathbb{1}(c) = 0$ we require that the top stack symbol in the input of the two function is $\#$, and otherwise that it is $\Box$. Following, when defining $\RMTransition^\mathcal{C}$ and $\RMReward^\mathcal{C}$, we let $\gamma$ be the top symbol as we have just specified.
    
    Note that $c'$ is obtained by adding or subtracting to $c$ some value $m$. To define $\RMTransition^\mathcal{C}$ and $ \RMReward^\mathcal{C}$, we differentiate between the case where $m \geq 0$ and the one where $m < 0$:

    \smallskip

    \begin{itemize}
        \item $m \geq 0$: in this case, the pdRM stack needs to be modified by pushing $m$ $\Box$ symbols. This can be easily done in a pdRM, by pushing the symbols in a single transition. For the transition function, we define $\RMTransition^\mathcal{C}(q, \sigma, \gamma) = (q', \Box^m \gamma)$, where $\Box^m$ is a string of $m$ $\Box$ symbols. For the reward function, we define $\RMReward^\mathcal{C}(q, \sigma, \gamma) = r$. 

        \item $m < 0$: in this case, the pdRM stack needs to be modified by popping $m$ $\Box$ symbols. Unlike the previous case, in this one we cannot pop all the $m$ symbols in a single transition, as pdRM allow to only pop one symbol (the topmost on the stack) per transition. We therefore use ``helper'' states to correctly pop the $m$ symbols. Let $u_{q, \sigma, \gamma}^1, \ldots, u_{q, \sigma, \gamma}^{|m|} \in U^\delta$ be the helper states for this transition. Starting from state $q$, the transition function will first move to the first helper state upon observing $\sigma$, and it will then traverse all the helper states until reaching the state $q'$ via $\epsilon$-transitions. While transitioning through the helper states, the transition function will also pop the topmost symbol if it is a $\Box$ (if it the $\#$, it will not pop it). In the meanwhile, the reward function will always output $0$, until the transition from the last helper state to the state $q'$ when it will output the reward $r$. 
        
        Thus, we define the transition and reward functions as follows:
        \begin{itemize}
            \item $\RMTransition^\mathcal{C}$: first, transition from state $q$ to state $u_{q, \sigma, \gamma}^1$ without popping the top symbol, \ie $\RMTransition^\mathcal{C}(q, \sigma, \gamma) = (u_{q, \sigma, \gamma}^1, \gamma)$. We now differentiate two cases: if $|m| = 1$, then it means that there are no more helper states to traverse and the pdRM should now move to the new state $q'$. We thus define two transitions from $u_{q, \sigma, \gamma}^1$, depending on whether the top symbol $\gamma$ is $\Box$ or $\#$. If $\gamma = \Box$, we should pop the symbol as the value of the CRA's counter was decreased by $1$; if $\gamma = \#$ then we do not pop the symbol as it means that the value of the CRA's counter was already $0$ (recall that the value of a CRA counter is always non-negative). Thus, we let $\RMTransition^\mathcal{C}(u_{q, \sigma, \gamma}^1, \epsilon, \Box) = (q', \epsilon)$ and $\RMTransition^\mathcal{C}(u_{q, \sigma, \gamma}^1, \epsilon, \#) = (q', \#)$. On the other hand, if $|m| > 1$, then there is at least one other helper state $u_{q, \sigma, \gamma}^2$. Analogously to the previous case, we define two transitions from $u_{q, \sigma, \gamma}^1$ to $u_{q, \sigma, \gamma}^2$, one for $\gamma = \#$ (where $\#$ is not popped) and one for $\gamma = \Box$ (where $\Box$ is popped). We now repeat the same construction for the rest of the helper states $u_{q, \sigma, \gamma}^3, \ldots, u_{q, \sigma, \gamma}^{|m|}$ (if there are any);

            \item $\RMReward^\mathcal{C}$: define $\RMReward^\mathcal{C}(q, \sigma, \gamma) = 0$ and $\RMReward^\mathcal{C}(u_{q, \sigma, \gamma}^i, \epsilon, \gamma) = 0$ for all $1 \leq i < |m|$ and $\gamma \in \{ \#, \Box \}$. Then, let $\RMReward^\mathcal{C}(u_{q, \sigma, \gamma}^{|m|}, \epsilon, \gamma) = r$ for $\gamma \in \{ \#, \Box \}$. 
        \end{itemize}
    \end{itemize}

    From the construction, it should be clear that both the CRA and the pdRM will output the same rewards while reading any input word.

\section{Optimality of Full-Policies}\label{sec:Appendix-C}
    In this section of the Appendix we present an environment and experimental results that show in practice how full-policies can be more optimal than top-$k$-policies, and how top-$k$-policies can be more optimal than top-$k'$-policies, for $k'<k$ (cfr. \Cref{sec:when-can-we-learn} in the paper). 

    For this experiment, we have devised an environment called ``\textsc{PaintWorld}'', in which the agent's goal is to request an adequate amount of soap to clean paint stains. We have encoded the task using a pdRM: whenever an episode starts, the stack is initialized by pushing a string of $n$ paint symbols, where $n$ is randomly sampled and ranges from $1$ to $5$. The actions the agent can perform consist in requesting $i$ units of soap, for $i$ ranging also from $1$ to $5$. Whenever the agent asks for $i$ units of soap, it receives a penalty of $\frac{i}{i + 1}$. Hence, if there are $i$ paint stains, it is always optimal to ask for precisely $i$ units of soap. The pdRM always outputs $0$ as its reward, 

    To show how agents that have access to more symbols on the stack can achieve better policies (with respect to the achieved reward), we have trained $5$ agents, where the $i$-th agent has access to the top $i$ symbols on the stack. Each agent was trained for a total of $500$ episodes, each lasting at most $5$ timesteps, and evaluated every $5$ training episodes. Agents were tested by running $5$ test episodes, where in the $i$-th test episode the agents had to clean $i$ paint stains. In the plot in \cref{fig:paintworld} we report the average rewards obtained by the agents in each test. As can be seen in the plot, the more symbols the agent can access on the stack, the better the policy it learns. The only agent who achieves the optimal performance is the full-policy agent: indeed, in the test episodes the full-policy agent is the only one which optimally chooses to request $i$ units of soap whenever it needs to clean $i$ stains of paint. 
 
    \begin{figure}
        \centering
        \includegraphics[width=\linewidth]{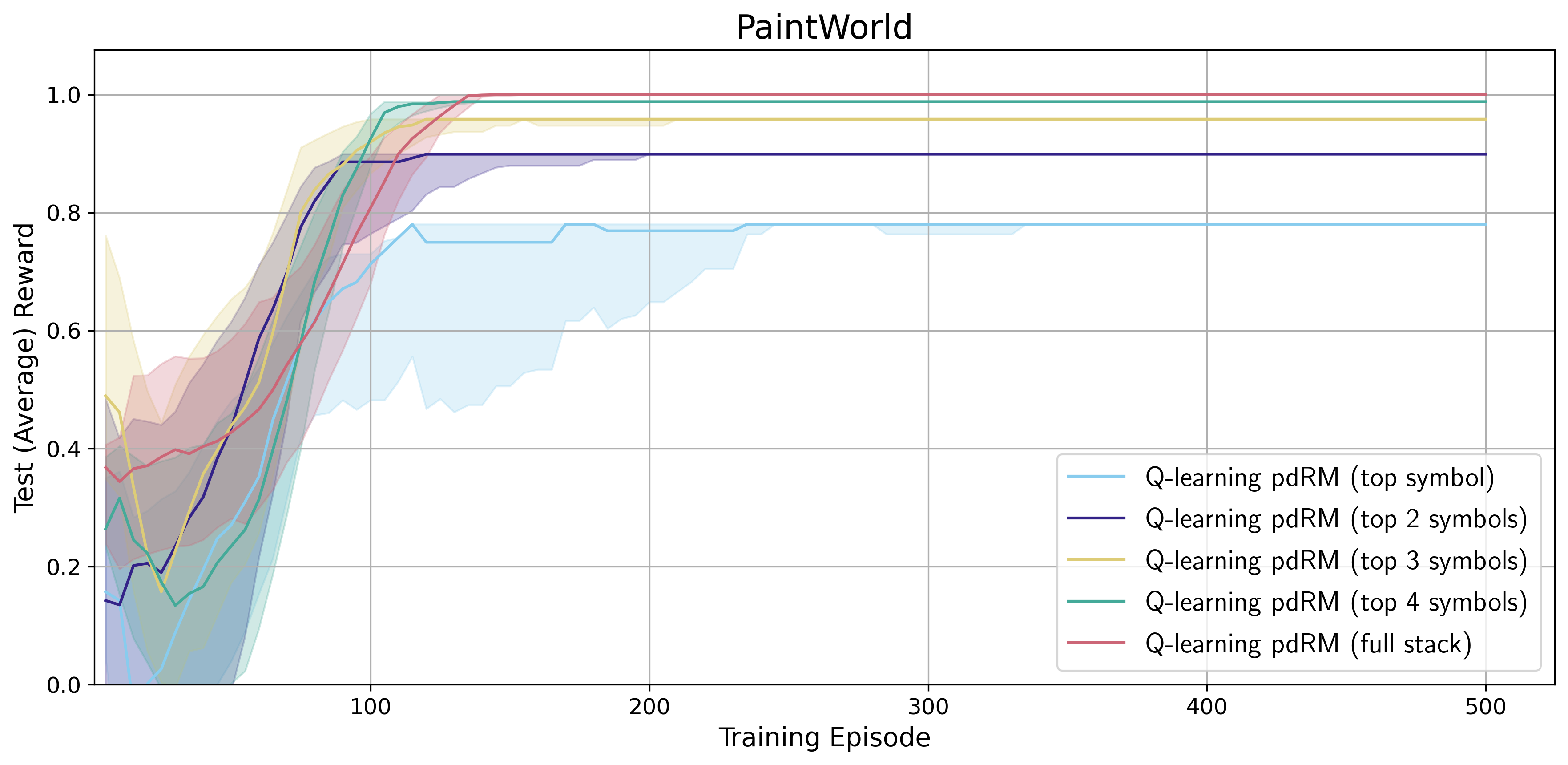}
        \caption{\textsc{PaintWorld} results. While rewards were normalized between $-1$ and $1$, here we only show the positive half of the $y$ axis to make the lines more distinguishable.}
        \label{fig:paintworld}
    \end{figure}

In this section, we provide, for each experiment, the experimental setup (e.g., environment size, episode length) and the hyperparameters settings. 

For each experiment, we report the results obtained by the agents across 10 different training runs. During training, after a set amount of training episodes (which varies across the different experiments), we evaluated the agents by running 10 test episodes. We randomly selected $10$ seeds $x_1, \ldots, x_{10}$, and for each agent and experiment, the $i$-th run was seeded with seed $x_i$. 

\section{Counterfactual Algorithm for \texorpdfstring{$k$}{k}-Policies}\label{sec:Appendix-D}
    Here we give a variant of the counterfactual algorithm presented in \cref{sec:counterfactual} which can be used to learn a $k$-policy. In the algorithm that follows, for a given pdRM stack string $\PRMStackString$, we denote with $\PRMStackString_{\upharpoonright k}$ the substring obtained by taking the top $k$ symbols of $\PRMStackString$. 

\begin{algorithm}[tb]
\caption{Q-learning with \CpdRM~(policy)} \label{alg:topk-cpdrm}
\label{alg:counterfactual-pdRM-topk-algorithm}
{\begin{flushleft}{\bfseries Input:} MDP-pdRM $\mMDPpdRM$, num\_episodes\end{flushleft}}
\begin{algorithmic}[1]
    \STATE Initialise $\qapprox(s, u , \zeta, a)$ arbitrarily for each $\MDPState \in \MDPStates, \RMState \in \RMStates, \PRMStackString \in \PRMStackAlphabet^k, a \in A$
    \STATE $\mathcal{O} \gets \{\}$ \COMMENT{Set of observed stack strings}
    \FOR{$\ell\leftarrow 0$ \TO num\_episodes} 
        \STATE $s \gets$ EnvInitialState(), $u \gets u_0$ and $\PRMStackString \gets \PRMStackInSymbol$
        \WHILE{$s$ is not terminal \AND $u \not\in F$} 
            \STATE $\mathcal{O} \gets \mathcal{O} \cup \{ \PRMStackString\}$
            \STATE Sample action $a$ using policy derived from $\qapprox$ (e.g., $\epsilon$-greedy) given current state $\tuple{s,\PRMStackString, u}$
            \STATE Take action $a$ and observe the next state $s'$
            \FOR{pdRM state $\RMState_c \in \RMStates$}
                \FOR{stack string $z_c\zeta_c \in \mathcal{O}$}
                    \STATE $\tuple{\RMState'_c, \PRMStackString'_c} \gets \RMTransition(\RMState_c, L(s, a, s'), \PRMStackSymbol_c)$
                    \STATE $r_c \gets \RMReward(u_c, L(s, a, s'), z_c)$
                    \IF{$s'$ is terminal \OR $u_c' \in F$} 
                        \STATE $\qapprox(s,u_c,(z_c\zeta_c)_{\upharpoonright k},a) \xleftarrow{\alpha} r_c$
                    \ELSE
                        \STATE $\qapprox(s,u_c,(z_c\zeta_c)_{\upharpoonright k},a) \xleftarrow{\alpha} r_c +\ $\\
                        $\qquad\qquad \gamma \max_{a'\in A}{\qapprox(s',u_c',(\zeta_c'\zeta_c)_{\upharpoonright k},a)}$
                \ENDIF
                \ENDFOR
            \ENDFOR
            \STATE Update $\langle \MDPState, \RMInState, \PRMStackString \rangle \gets 
            \tuple{\MDPState', \RMState', \PRMStackString'} $, where $\tuple{\RMState', \PRMStackString'} \dashv_{\Labelling(\MDPState, \MDPAction, \MDPState')} \tuple{\RMState, \PRMStackString}$
        \ENDWHILE
    \ENDFOR
\end{algorithmic}
\end{algorithm}

\section{Experimental Evaluation Details}\label{sec:Appendix-E}

In this section, we provide, for each experiment, the experimental setup (e.g., environment size, episode length) and the hyperparameters settings. 

For each experiment, we report the results obtained by the agents across 10 different training runs. During training, after a set amount of training episodes (which varies across the different experiments), we evaluated the agents by running 10 test episodes. We randomly selected $10$ seeds $x_1, \ldots, x_{10}$, and for each agent and experiment, the $i$-th run was seeded with seed $x_i$. 

\subsection*{\textsc{LetterEnv}}
    We note that for this experiment we have used the set of hyperparamters provided in the most updated version (as of May 7$^{th}$, 2025) of the counting reward automata's repository (\href{https://github.com/TristanBester/counting-reward-machines/tree/main}{https://github.com/TristanBester/counting-reward-machines/}). Following are the hyperparameters:

    \begin{itemize}
        \item Learning rate $\alpha = 0.01$;
        \item Exploration rate $\epsilon = 0.01$;
        \item Discount factor $\gamma = 0.99$.
    \end{itemize}

    As for the environment and experimental setup, we have trained the agents for a total of $5,000$ episodes, each lasting at most $300$ timesteps. Testing episodes were run every $100$ training episodes. The map's size was $3\times7$ (rows $\times$ columns).

    \Cref{fig:letterenv-pdrm} contains the \pdRM used in this experiment. 

    \begin{figure}[H]
        \centering
        \resizebox{\linewidth}{!}
        {
            \begin{tikzpicture}[shorten >=1pt,auto]
                \node[state,initial]      (u_0)                           {\Large$u_0$};
                \node[state]              (u_1) [right=5cm of u_0]        {\Large$u_1$};
                \node[state, accepting]   (u_2) [right=5cm of u_1]  {\Large$u_2$};
    
                \path[->]
                    (u_0)   
                        edge [loop below]   node [align=center] {$\{ A \}, \PRMStackSymbol / A\ \PRMStackSymbol, -0.01$\\$ \{C\}, \PRMStackSymbol / \PRMStackSymbol, -0.01$\\$ \{  \}, \PRMStackSymbol / \PRMStackSymbol, -0.01$}    ()
                        edge    node    {$\{ B \}, \PRMStackSymbol / \PRMStackSymbol, -0.01$}  (u_1)
    
                    (u_1)
                        edge [loop below]   node [align=center] {$ \{ A \}, \PRMStackSymbol / \PRMStackSymbol, -0.01$\\$\{B\}, \PRMStackSymbol / \PRMStackSymbol, -0.01$\\$ \{  C\}, \PRMStackSymbol / \epsilon, -0.01$\\$ \{  \}, \PRMStackSymbol / \PRMStackSymbol, -0.01$}    ()
                        edge node [align=center]    {$\{C\}, \# / \#, 1$}    (u_2);
            \end{tikzpicture}
        }
        \caption{Pushdown reward machine for the \textsc{LetterEnv} domain. $\#$ is a special symbol used to denote the start of the stack.
        }
        \label{fig:letterenv-pdrm}
    \end{figure}

\subsection*{\textsc{1-TreasureMaze}}

    First, we include in \cref{fig:maze-5x5} a rendition of the smallest maze that we have used in this experiment. 

     \begin{figure}[H]
            \centering
            \scalebox{.6}{
                \begin{tikzpicture}
                [
                    y=-1cm,
                    box/.style={rectangle,draw=black,thick, minimum size=1cm},
                ]
                    \draw[xstep=1,ystep=-1] (0,5) grid (5,0);

                    \node[box] at (1.5, 0.5){{\Huge\textsf{x}}};
                    \node[box] at (3.5, 3.5){{\Huge\textsf{t}}};
            
                    \node[box,fill=black] at (0.5, 0.5){};
                    \node[box,fill=black] at (2.5, 0.5){};
                    \node[box,fill=black] at (3.5, 0.5){};
                    \node[box,fill=black] at (4.5, 0.5){};
            
                    \node[box,fill=black] at (0.5, 1.5){};
                    \node[box,fill=black] at (0.5, 2.5){};
                    \node[box,fill=black] at (0.5, 3.5){};
            
                    \node[box,fill=black] at (4.5, 1.5){};
                    \node[box,fill=black] at (4.5, 2.5){};
                    \node[box,fill=black] at (4.5, 3.5){};
            
                    \node[box,fill=black] at (2.5, 2.5){};
                    \node[box,fill=black] at (3.5, 2.5){};
            
                    \node[box,fill=black] at (0.5, 4.5){};
                    \node[box,fill=black] at (1.5, 4.5){};
                    \node[box,fill=black] at (2.5, 4.5){};
                    \node[box,fill=black] at (3.5, 4.5){};
                    \node[box,fill=black] at (4.5, 4.5){};
                \end{tikzpicture}
                }
        \caption{Rendition of the 5$\times$5 maze. \textsf{x} is the initial and exit location, \textsf{t} the location of the treasure, and the black cells are walls.}
        \label{fig:maze-5x5}
        \end{figure}

    We have used the following set of hyperparameters for this experiment:
    \begin{itemize}
        \item Learning rate $\alpha = 0.5$;
        \item Exploration rate $\epsilon$ initialised at $1$, and linearly decayed to $0.01$ by multiplying it by $0.995$ after every episode;
        \item Discount factor $\gamma = 0.99$.
    \end{itemize}

    Agents were trained for a total of $10,000$ episodes and were tested every $100$ episodes. The maximum number of timesteps and limit for wall clock times per episode are as follows: 
    \begin{itemize}
        \item $5\times5$: $15$ timesteps, $0.03$ seconds per episode;
        \item $10\times10$: $15$ timesteps, $0.06$ seconds per episode;
        \item $20\times20$: $300$ timesteps, $0.1$ seconds per episode.
    \end{itemize}
    For the time limit, we have measured the wall clock time of the training process by using the Python \texttt{time.perf\_counter()} function. 

    \Cref{fig:1-treasuremaze-pdrm} contains the \pdRM used in this experiment.

    \begin{figure}[H]
        \centering
        \resizebox{\linewidth}{!}
        {
            \begin{tikzpicture}[shorten >=1pt,auto]
                \node[state,initial]      (u_0)                           {\Large $u_0$};
                \node[state]              (u_1) [right=5cm of u_0]        {\Large $u_1$};
                \node[state, accepting]   (u_3) [right=5cm of u_1]  {\Large $u_3$};
                \node[state, accepting]   (u_2) [above=3cm of u_3]  {\Large $u_2$};
    
                \path[->]
                    (u_0)   
                        edge [loop below]   node [align=center] {\Large $ \{ \mathsf{dir} \}, \PRMStackSymbol / \mathsf{dir}\ \PRMStackSymbol, 0$}    ()
                        edge    node    {\Large $ \{ \mathsf{dir}, \mathsf{t} \}, \PRMStackSymbol / \mathsf{dir}\ \PRMStackSymbol, 1$}  (u_1)
    
                    (u_1)
                        edge [loop below]   node [align=center] {\Large $\{ \mathsf{\overline{dir}} \}, \mathsf{dir} / \epsilon, 1$}    ()
                        edge    node    {\Large $ \{ \mathsf{\overline{dir}}, \mathsf{x} \}, \mathsf{dir} / \epsilon, 1\mathrm{e}{5}$}  (u_3)
                        edge [bend left]   node [align=center]    {\Large $\{ \mathsf{dir}' \}, \mathsf{dir} / \epsilon, -1\mathrm{e}{5}$\\\Large $ \{ \mathsf{dir}', \mathsf{x} \}, \mathsf{dir} / \epsilon, -1\mathrm{e}{5}$}    (u_2);
            \end{tikzpicture}
        }
        \caption{
        Pushdown reward machine for the \textsc{1-TreasureMaze} domain. For an explanation of the labels we refer the reader to Figure 1 in the main paper.
        }
        \label{fig:1-treasuremaze-pdrm}
    \end{figure}

    As we have used three different maps for this experiment, we include here (\cref{fig:1-treasuremaze-5x5-results,fig:1-treasuremaze-10x10-results,fig:1-treasuremaze-20x20-results}) also the results on each of these maps. 

    \begin{figure}
        \centering
        \includegraphics[width=\linewidth]{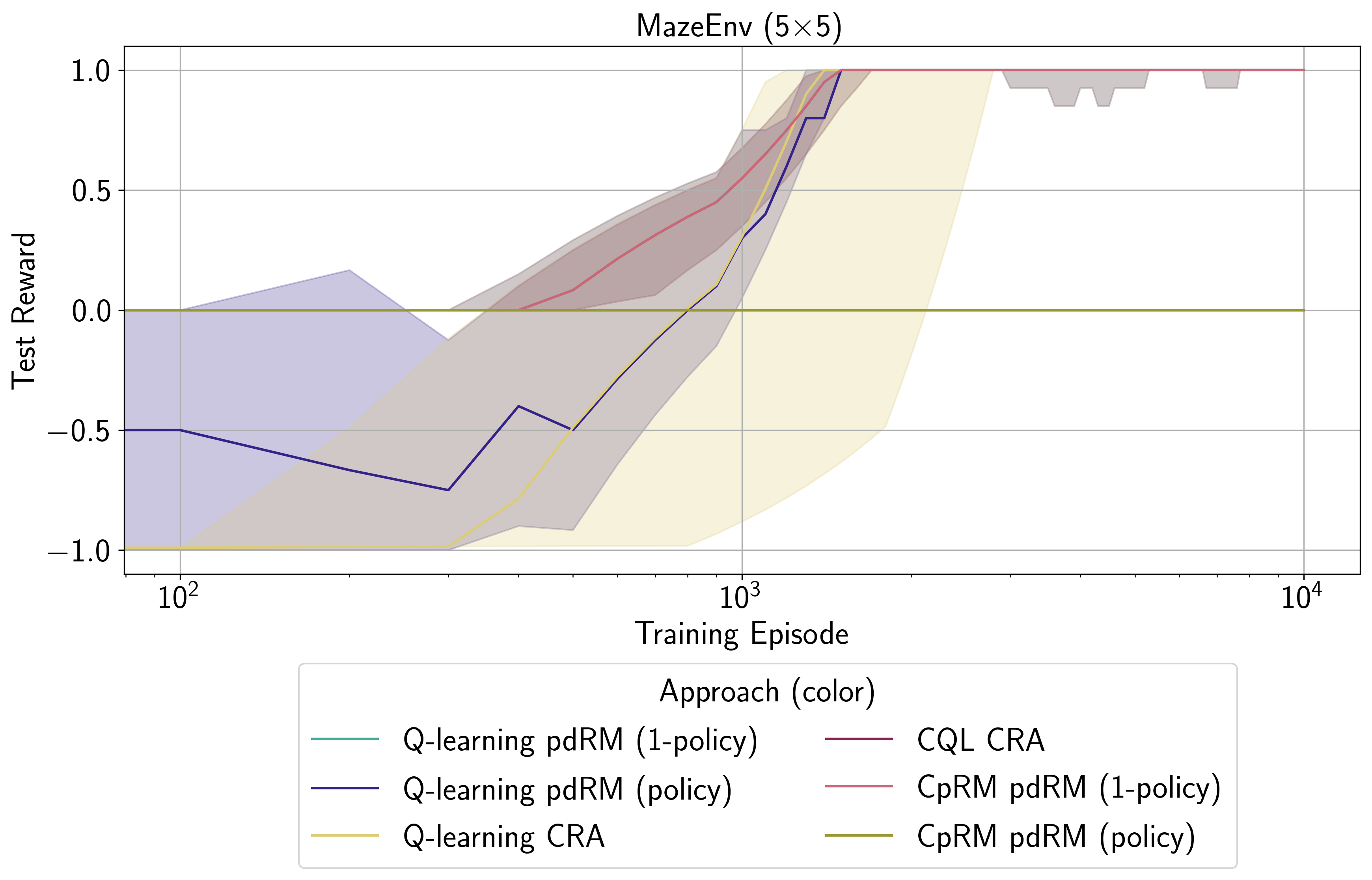}
        \caption{\textsc{1-TreasureMaze} results, $5\times5$ map.}
        \label{fig:1-treasuremaze-5x5-results}
    \end{figure}

    \begin{figure}
        \centering
        \includegraphics[width=\linewidth]{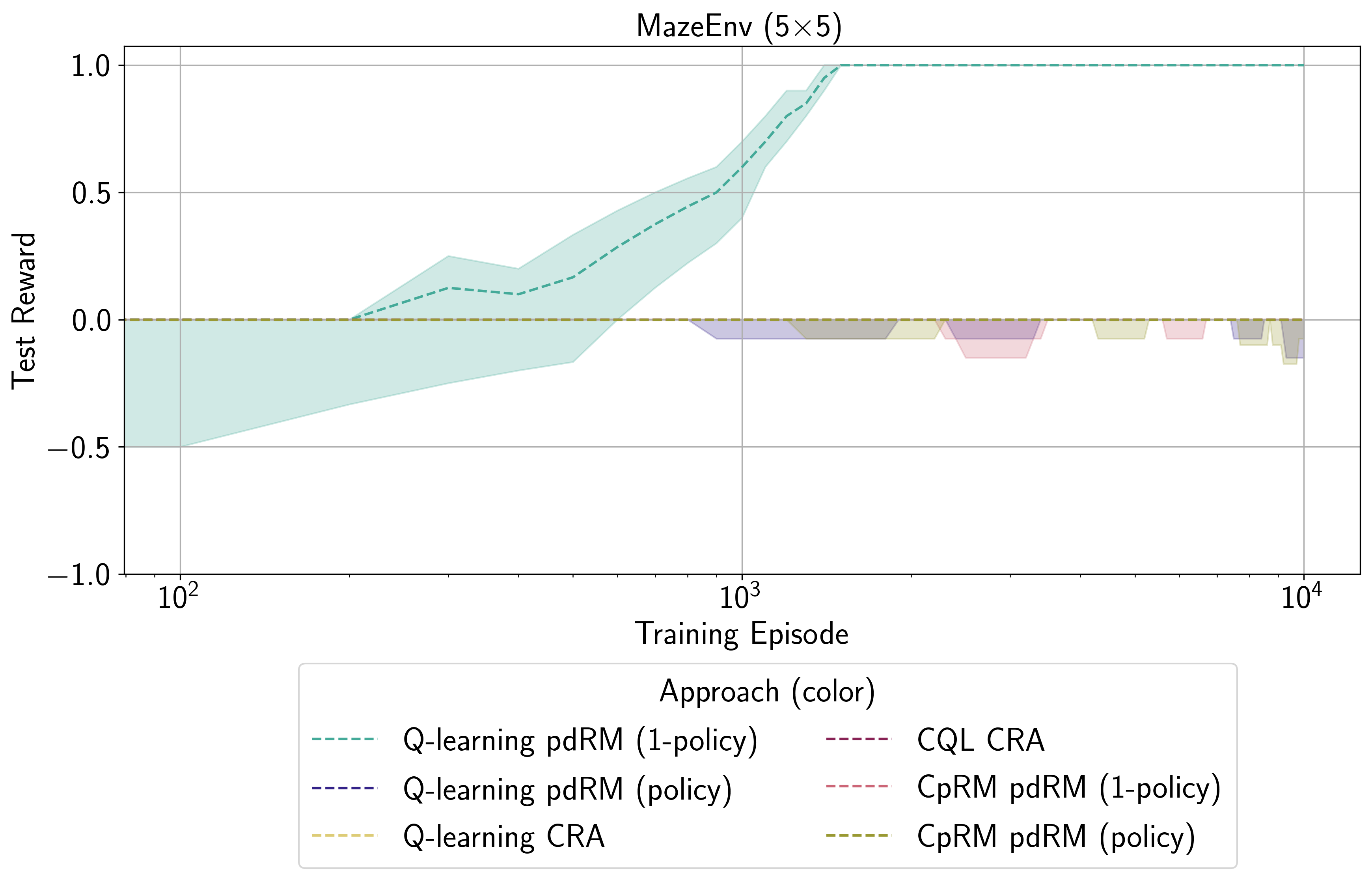}
        \caption{\textsc{1-TreasureMaze} results, $10\times10$ map.}
        \label{fig:1-treasuremaze-10x10-results}
    \end{figure}

    \begin{figure}
        \centering
        \includegraphics[width=\linewidth]{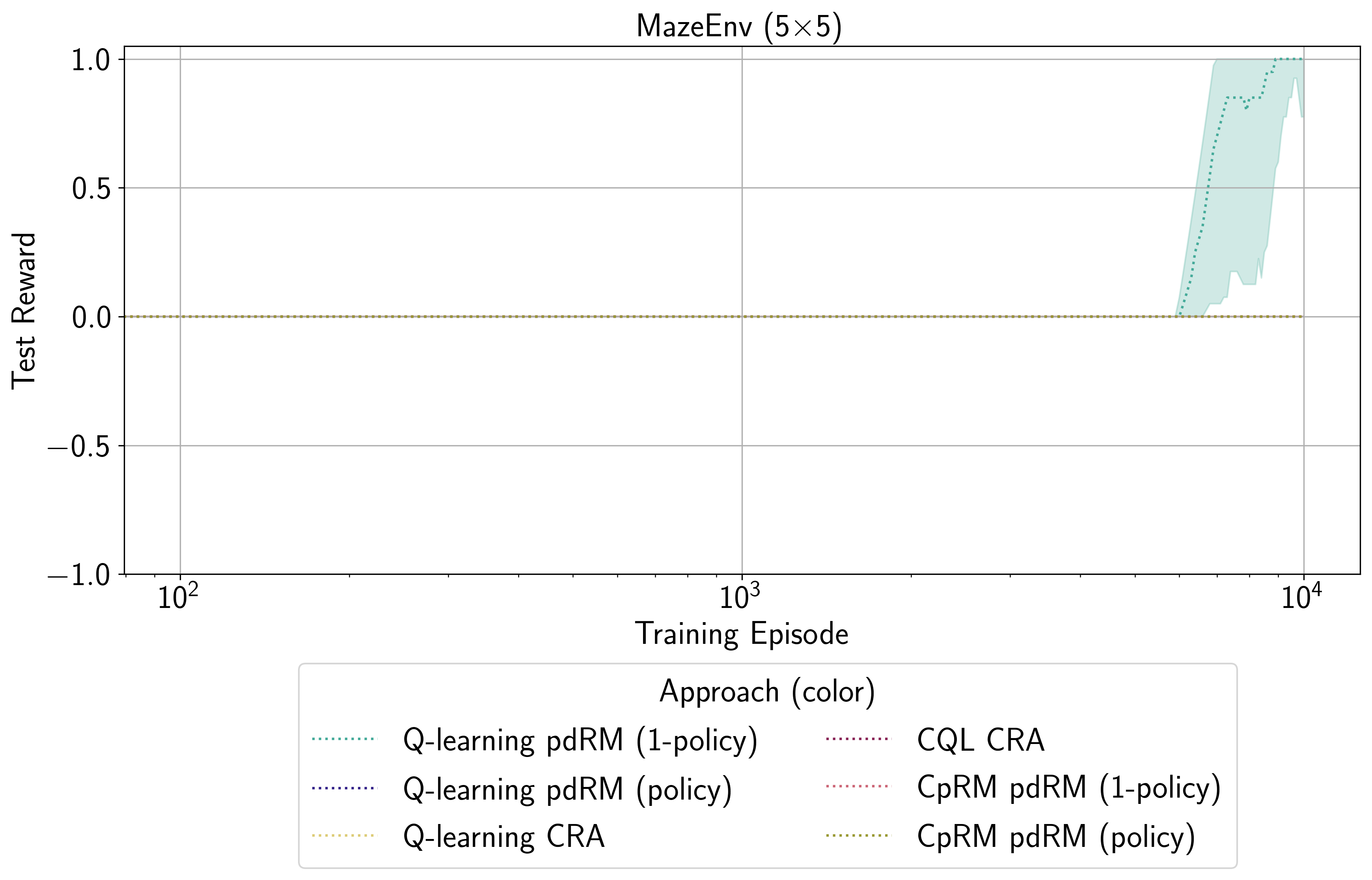}
        \caption{\textsc{1-TreasureMaze} results, $20\times20$ map.}
        \label{fig:1-treasuremaze-20x20-results}
    \end{figure}

\subsection*{\textsc{MultipleTreasureMaze}}
    We have used the following set of hyperparameters for this experiment:
    \begin{itemize}
        \item Learning rate $\alpha = 0.5$;
        \item Exploration rate $\epsilon$ initialised at $1$, and linearly decayed to $0.01$ by multiplying it by $0.995$ after every episode;
        \item Discount factor $\gamma = 0.9$.
    \end{itemize}

    Agents were trained for a total of $10,000$ episodes and were tested every $100$ episodes. The maximum number of timesteps per episode are as follows: 
    \begin{itemize}
        \item $10\times10$: $1,000$ timesteps;
        \item $20\times20$: $4,000$ timesteps.
    \end{itemize}

    \Cref{fig:multipletreasuremaze-pdrm} contains the \pdRM used in this experiment. 

    \begin{figure}[H]
        \centering
        \resizebox{\linewidth}{!}
        {
            \begin{tikzpicture}[shorten >=1pt,auto]
                \node[state,initial]      (u_0)                           {$u_0$};
                \node[state]              (u_1) [right=5cm of u_0]        {$u_1$};
                \node[state]              (u_2) [below=3cm of u_1]        {$u_2$};
                \node[state]              (u_3) [below=3cm of u_2]        {$u_3$};
                \node[state, accepting]   (u_4) [left=5cm of u_2]         {$u_4$};
                \node[state, accepting]   (u_5) [right=5cm of u_3]        {$u_5$};
    
                \path[->]
                    (u_0)   
                        edge [loop above]   node [align=center] {$\{ \mathsf{dir} \}, \PRMStackSymbol / \mathsf{dir}\ \PRMStackSymbol, 0$}    ()
                        edge    node    {$\{ \mathsf{dir}, \mathsf{s} \}, \PRMStackSymbol / \mathsf{dir}\ \PRMStackSymbol, 1\mathrm{e}{3}$}  (u_1)
    
                    (u_1)
                        edge [loop above]   node [align=center] {$ \{ \mathsf{dir} \}, \PRMStackSymbol / \mathsf{dir}\ \PRMStackSymbol, 0$}    ()
                        edge [bend right]   node [left, align=center]    {$\{ \mathsf{dir}, \mathsf{t} \}, \PRMStackSymbol / \mathsf{dir}\ \PRMStackSymbol, 1\mathrm{e}{3}$}   (u_2)

                    (u_2)
                        edge [loop right]   node [align=center] {$\{ \mathsf{\overline{dir}} \}, \mathsf{dir} / \epsilon, 1$}    ()
                        edge [bend right]   node [right, align=center]    {$\{ \mathsf{\overline{dir}}, \mathsf{s} \}, \mathsf{dir} /  \epsilon, 1\mathrm{e}{3}$}   (u_1)
                        edge  node [align=center]    {$\{ \mathsf{\overline{dir}}, \mathsf{s}, \mathsf{T} \}, \mathsf{dir} / \epsilon, 1\mathrm{e}{3}$}    (u_3)
                        edge node [above, align=center]    {$\{ \mathsf{dir}' \}, \mathsf{dir} / \epsilon , -1\mathrm{e}{5}$\\$\{ \mathsf{dir}', \mathsf{s} \}, \mathsf{dir} / \epsilon, -1\mathrm{e}{5}$}    (u_4)
                        
                    (u_3)
                       edge [loop below]   node [align=center] {$\{ \mathsf{\overline{dir}} \}, \mathsf{dir} / \epsilon, 1$}    ()
                       edge node [below left, align=center]    {$\{ \mathsf{dir}' \}, \mathsf{dir} / \epsilon, -1\mathrm{e}{5}$\\$\{ \mathsf{dir}', \mathsf{x} \}, \mathsf{dir} / \epsilon, -1\mathrm{e}{5}$}    (u_4)
                       edge    node    {$\{ \mathsf{\overline{dir}}, \mathsf{x} \}, \mathsf{dir} / \epsilon, 1\mathrm{e}{5}$}  (u_5)
                        ;
            \end{tikzpicture}
        }
        \caption{Pushdown reward machine for the \textsc{MultipleTreasureMaze} domain. $\mathsf{s}$ is the event observed upon reaching the safe location in the maze, $\mathsf{T}$ the one when all treasures have been found. 
        }
        \label{fig:multipletreasuremaze-pdrm}
    \end{figure}

\subsection*{\textsc{DeliverWorld}}

    For this experiment, we report the hyperparameters to train the policy of the Q-learning agent, and the meta-policy and the options' policies of the hierarchical agent. To train these policies, we used the same set of hyperparamters: 

    \begin{itemize}
        \item Learning rate $\alpha = 0.01$;
        \item Exploration rate $\epsilon$ initialised at $1$, and linearly decayed to $0.1$ by multiplying it by $0.9995$ after every episode;
        \item Discount factor $\gamma = 0.99$.
    \end{itemize}
        
    Agents were trained for a total of $10,000$ episodes in the experiment where they performed 8 deliveries both during training and testing episodes, and for $15,000$ episodes in the experiment where they performed 4 deliveries during training and 9 deliveries during testing episodes. In both cases, they were tested every $100$ episodes. For all experiments, each episode (both during training and testing) lasted at most $2,000$ timesteps. At each timestep, the agents received a punishment of $-\frac{1}{2}$ (aside from the \pdRM's reward). 

    \Cref{fig:deliverworld-pdrm} contains the \pdRM used in this experiment. 

    \begin{figure}[H]
        \centering
        \resizebox{\linewidth}{!}
        {
            \begin{tikzpicture}[shorten >=1pt,auto]
                \node[state,initial]      (u_0)                           {$u_0$};
                \node[state]              (u_1) [right=5cm of u_0]        {$u_1$};
                \node[state, accepting]   (u_2) [right=5cm of u_1]        {$u_2$};
    
                \path[->]
                    (u_0)   
                        edge    node    {$\{ \mathsf{loc\_seq}\}, \# / \mathsf{loc\_seq}\ \#, 0$}  (u_1)
    
                    (u_1)
                        edge [loop below]   node [align=center] {$\{ \mathsf{loc} \}, \mathsf{loc} / \epsilon, 1\mathrm{e}{3}$}    ()
                        edge    node    {$\{  \}, \# / \#, 1\mathrm{e}{3}$}  (u_2);
            \end{tikzpicture}
        }
        \caption{Pushdown reward machine for the \textsc{DeliverWorld} domain. $\mathsf{loc\_seq}$ is the sequence of location types given in input at the start of each episode. $\#$ is a special symbol used to denote the start of the stack.
        }
        \label{fig:deliverworld-pdrm}
    \end{figure}

\subsection*{\textsc{WaterWorld}}

    First, for the agents that were trained with ``vanilla'' PPO, we have used the \textsc{Stable-Baselines3}\footnote{\url{https://stable-baselines3.readthedocs.io/en/master/}} implementation of PPO. Instead, for the agent trained with recurrent PPO, we have used the \textsc{SB3 Contrib}\footnote{\url{https://stable-baselines3.readthedocs.io/en/master/guide/sb3_contrib.html}} implementation. \textsc{Stable-Baselines3-Contrib} is the repository extending \textsc{Stable-Baselines3} with community-based implementations of modified or more experimental algorithms.

    We report the set of hyperparameters (using the names of their \textsc{StableBaselines3} variables) that we used for ``vanilla'' PPO and recurrent PPO. We have used a different set of hyperparameters on the two maps ($17\times17$ and $20\times20$). Every hyperparameter that is not reported was set to its default value in \textsc{StableBaselines3} and \textsc{StableBaselines3-Contrib}. Finally, we note that the reported hyperparameters were chosen after tuning both algorithms on both maps. 

    For the two maps and the two agents, we had the following experimental setup. The set of experience samples were collected from $5$ copies of the environment. Each training and testing episode lasted at most $10,000$ timesteps, and agents were evaluated every $100,000$ training timesteps. The learning rate was linearly decayed to $0$ (starting from the ones we will report) in all runs.

    For the $17\times17$ map we have trained both agents for a total of $3,000,000$ timesteps. Following are the hyperparameters for the $17\times17$ map:
    \begin{itemize}
        \item Vanilla PPO:
        \begin{itemize}
            \item $\texttt{learning\_rate} = 3\mathrm{e}{-6}$;
            \item $\texttt{n\_steps} = 10,000$;
            \item $\texttt{batch\_size} = 3,000$;
            \item $\texttt{n\_epochs} = 30$;
            \item $\texttt{gamma} = 0.99$;
            \item $\texttt{gae\_lambda} = 0.9$;
            \item $\texttt{clip\_range} = 0.15$;
            \item $\texttt{clip\_range\_vf} = 1,000$;
            \item $\texttt{vf\_coef} = 0.5$;
            \item $\texttt{target\_kl} = 0.3$;
            \item $\texttt{normalize\_advantage} = \texttt{True}$;
        \end{itemize}
        \item Recurrent PPO:
        \begin{itemize}
            \item $\texttt{learning\_rate} = 3\mathrm{e}{-9}$;
            \item $\texttt{n\_steps} = 10,000$;
            \item $\texttt{batch\_size} = 2,000$;
            \item $\texttt{n\_epochs} = 20$;
            \item $\texttt{gamma} = 0.95$;
            \item $\texttt{gae\_lambda} = 0.95$;
            \item $\texttt{clip\_range} = 0.15$;
            \item $\texttt{clip\_range\_vf} = 1,000$;
            \item $\texttt{vf\_coef} = 1.0$;
            \item $\texttt{target\_kl} = 0.3$;
            \item $\texttt{normalize\_advantage} = \texttt{True}$;
        \end{itemize}
    \end{itemize}

    For the $20\times20$ map we have trained both agents for a total of $5,000,000$ timesteps. Following are the hyperparameters for the $20\times20$ map:
    \begin{itemize}
        \item Vanilla PPO:
        \begin{itemize}
            \item $\texttt{learning\_rate} = 3\mathrm{e}{-5}$;
            \item $\texttt{n\_steps} = 10,000$;
            \item $\texttt{batch\_size} = 3,000$;
            \item $\texttt{n\_epochs} = 30$;
            \item $\texttt{gamma} = 0.99$;
            \item $\texttt{gae\_lambda} = 0.9$;
            \item $\texttt{clip\_range} = 0.15$;
            \item $\texttt{clip\_range\_vf} = 1,000$;
            \item $\texttt{vf\_coef} = 0.5$;
            \item $\texttt{target\_kl} = 0.3$;
            \item $\texttt{normalize\_advantage} = \texttt{True}$;
        \end{itemize}
        \item Recurrent PPO:
        \begin{itemize}
            \item $\texttt{learning\_rate} = 3\mathrm{e}{-6}$;
            \item $\texttt{n\_steps} = 10,000$;
            \item $\texttt{batch\_size} = 2,000$;
            \item $\texttt{n\_epochs} = 20$;
            \item $\texttt{gamma} = 0.95$;
            \item $\texttt{gae\_lambda} = 0.95$;
            \item $\texttt{clip\_range} = 0.15$;
            \item $\texttt{clip\_range\_vf} = 1,000$;
            \item $\texttt{vf\_coef} = 1.0$;
            \item $\texttt{target\_kl} = 0.3$;
            \item $\texttt{normalize\_advantage} = \texttt{True}$;
        \end{itemize}
    \end{itemize}

    \Cref{fig:waterworld-pdrm} contains the \pdRM used in this experiment. 

    \begin{figure}[H]
        \centering
        \resizebox{\linewidth}{!}
        {
            \begin{tikzpicture}[shorten >=1pt,auto]
                \node[state,initial]      (u_0)                           {$u_0$};
                \node[state]              (u_1) [right=5cm of u_0]        {$u_1$};
                \node[state, accepting]   (u_2) [right=5cm of u_1]        {$u_2$};
    
                \path[->]
                    (u_0)   
                        edge [loop below]   node [align=center] {$\{ \mathsf{even} \}, \PRMStackSymbol / \mathsf{ball\_0}\ \PRMStackSymbol, 1\mathrm{e}{3}$\\$\{ \mathsf{odd} \}, \PRMStackSymbol / \mathsf{food\_1}\ \PRMStackSymbol, 1\mathrm{e}{3}$}    ()
                        edge    node    {$\{ \mathsf{A\_F} \}, \PRMStackSymbol / \PRMStackSymbol, 1$}  (u_1)
    
                    (u_1)
                        edge [loop below]   node [align=center] {$\{ \mathsf{ball\_0} \}, \mathsf{ball\_0} / \epsilon, 1\mathrm{e}{3}$\\$\{ \mathsf{ball\_1} \}, \mathsf{ball\_1} / \epsilon, 1\mathrm{e}{3}$}    ()
                        edge    node    {$\{  \}, \# / \#, 1\mathrm{e}{5}$}  (u_2);
            \end{tikzpicture}
        }
        \caption{Pushdown reward machine for the \textsc{WaterWorld} domain. $\mathsf{even}$ and $\mathsf{odd}$ are the events observed when, respectively, the agent captures a ball (except for the first two) which label has the corresponding parity. $\mathsf{A\_F}$ is the event observed when all balls (except for the first two) have been captured. $\#$ is a special symbol used to denote the start of the stack.
        }
        \label{fig:waterworld-pdrm}
    \end{figure}

\end{document}